\newtheorem{theorem}{Theorem}[section]
\newtheorem{lemma}[theorem]{Lemma}
\theoremstyle{definition}
\newtheorem{definition}{Definition}[section]
\theoremstyle{remark}
\def\data{\mathcal{D}}
\def\KL{\mathbf{d}_{\mathrm{KL}}}
\def\normal{\mathcal{N}}
\def\E{\mathbb{E}}
\def\V{\mathbb{V}}
\def\Pr{\mathbb{P}}
\def\1{\mathbf{1}}
\newcommand{\ignore}[1]{}
\def\indiWt{\hat{u}^{(k)}_{t-1}}
\def\indiLoss{\hat{\ell}^{(k)}_{t-1}}
\def\indiWtOpt{u^{(k)}_*}
\def\indiLossOpt{\ell^{(k)}_*}
\def\indiWtPrior{\overline{u}}
\def\indiLossPrior{\overline{\ell}}
\def\regVar{\lambda_\ell}
\def\regWt{\Lambda}
\DeclareMathOperator*{\argmin}{arg\,min}
\DeclareMathAlphabet{\mathdutchcal}{U}{dutchcal}{m}{n}
\newcommand{\kibitz}[2]{\ifnum\Comments=1{\textcolor{#1}{\textsf{\footnotesize #2}}}\fi}
\title{Adaptive Crowdsourcing Via Self-Supervised Learning}
\author[1]{Anmol Kagrecha}
\author[2]{Henrik Marklund}
\author[1,3]{Benjamin Van Roy}
\author[2]{Hong Jun Jeon}
\author[4]{Richard Zeckhauser}
\affil[1]{Department of Electrical Engineering, Stanford University}
\affil[2]{Department of Computer Science, Stanford University}
\affil[3]{Department of Management Science and Engineering, Stanford University}
\affil[4]{John F. Kennedy School of Government, Harvard University}
\date{}
\begin{document}

\maketitle

\begin{abstract}

Common crowdsourcing systems average estimates of a latent quantity of interest provided by many crowdworkers to produce a group estimate.  We develop a new approach -- {\it predict-each-worker} -- that leverages self-supervised learning and a novel aggregation scheme.  This approach adapts weights assigned to crowdworkers based on estimates they provided for previous quantities.  When skills vary across crowdworkers or their estimates correlate, the weighted sum offers a more accurate group estimate than the average.  Existing algorithms such as expectation maximization can, at least in principle, produce similarly accurate group estimates.  However, their computational requirements become onerous when complex models, such as neural networks, are required to express relationships among crowdworkers.  Predict-each-worker accommodates such complexity as well as many other practical challenges.  We analyze the efficacy of predict-each-worker through theoretical and computational studies.  Among other things, we establish asymptotic optimality as the number of engagements per crowdworker grows.

\end{abstract}

\section{Introduction}

Aggregating opinions from a diverse group often yields more accurate estimates than relying on a single individual.  Applications are wide-ranging.  Intelligence agencies query crowds to help predict uncertain and significant events, showing that a collective effort can give better results \citep{tetlock2016superforecasting}.  Aggregating inputs has also been shown to improve answers to questions in the single-question crowd wisdom problem \citep{prelec2017solution} and in financial forecasting \citep{da2020harnessing}.  Modern artificial intelligence (AI) also relies on aggregation of input from human annotators for the development of image classifiers \citep{vaughan2017making} and chatbots \citep{ouyang2022training, openai2023gpt, touvron2023llama, google2023bard, anthropic2023introducing}.

The recent explosion of AI has led to considerable discussion of situations where computers could partly or wholly replace humans for crowdsourcing tasks \citep{zhu2023can, tornberg2023chatgpt, ollion2023chatgpt, boussioux2023crowdless}. The algorithms we discuss in this paper could apply to such settings but were conceived to apply to groups of people in the spirit of initial work on crowdsourcing.  The mid-nineteenth-century Smithsonian Institution Meteorological Project processed localized data into national weather maps.  One hundred fifty volunteer respondents provided information by telegraph \citep{Smithsonian2024HenryMeteorology}, which was processed by a dozen-plus humans to create a national data map.  By 1860, five hundred local weather stations were reporting \citep{NWS2024Timeline}.

Many traditional crowdsourcing examples involve large numbers of participants.  However, the underlying principles apply even with quite small groups, as would be the norm in many contexts, such as with boards of directors or R\&D research teams.

A conventional approach to aggregation assigns equal importance to each crowdworker's input.  This approach, which we refer to as {\it averaging}, is simple and enjoys wide use.  With a sufficiently large number of crowdworkers, averaging performs well.  However, engaging so many crowdworkers can be costly or infeasible.  Because of this, practical crowdsourcing systems work with a limited number of crowdworkers, and averaging leaves substantial room for improvement.  We discuss two simple cases where averaging can require too many crowdworkers in order to produce accurate estimates.  One arises when skills vary across crowdworkers; those who are more skilled ought to be assigned greater weight.  Another example arises when crowdworkers share sources of information, perhaps because they watch the same news channels or follow similar accounts on social media, or utilize the same data sets.  In this case, each crowdworker's estimate does not provide independent information, and the crowdworker's weight should decrease with the degree of dependence.

In principle, if each crowdworker's level of skill and independence are known, it should be possible to derive weights that improve group estimates relative to averaging.  Our work introduces an approach that \emph{approximates} these weights given \emph{estimates} of past outcomes provided by the same crowdworkers.  The weights reflect what is learned about skills and independence.

If patterns among crowdworker estimates are expressed by simple -- for example, linear -- models, existing algorithms offer effective means for fitting to past observations.  Expectation maximization (EM), in particular, offers a popular option used for this purpose \citep{zhang2016learning, zheng2017truth}.  However, in many contexts, the relationships are complex and call for flexible machine learning models such as neural networks.  Computational requirements of existing algorithms such as EM become onerous.  

To address the limitations of existing methods, we introduce a new approach called \textit{predict-each-worker}, which leverages self-supervised learning and a novel aggregation scheme.  This approach uses self-supervised learning (SSL) \citep{liu2021self} to infer patterns among crowdworker estimates.  In particular, for each crowdworker, predict-each-worker produces an SSL model that predicts the crowdworker's estimate based on past observations and estimates of other crowdworkers.  This enables modeling of complex patterns among crowdworker estimates, for example, by using neural network SSL models.  The aggregation scheme leverages these SSL models to weight crowdworkers to a degree that increases with the crowdworker's skill and independence.

In short, the predict-each-worker approach employs each individual crowdworker’s estimates, after processing by the center, as the basis for producing a group estimate.  These individual estimates represent an atomic approach for building a group estimate.  This approach contrasts sharply with other methods that instead proceed from a series of group estimates.  For example, the famed Delphi method \citep{dalkey1963experimental} involves a panel of experts or crowdworkers who, through several rounds of interactions, refine their estimates.  After each round, a facilitator provides an anonymous summary of the crowdworkers' estimates.  The crowdworkers are then encouraged to revise their earlier estimates in light of the replies of other members of their panel.  This process is repeated until the group converges on a consensus.  This method has been applied, for example, to military forecasting \citep{dalkey1963experimental}, health research \citep{de2003delphi}, and as a method to do graduate research \citep{skulmoski2007delphi}.  While problem settings where crowdworkers could update their estimates are important and interesting, our approach, predict-each-worker, is not designed for such settings, and we leave it for future work to develop an extension.

To investigate the performance of our approach, we perform both computational and theoretical studies.  For a Gaussian data-generating process, we prove that our algorithm is asymptotically optimal, and through simulations, we find that this method outperforms averaging crowdworker estimates.  Moreover, our empirical results indicate that when the dataset is large, predict-each-worker performs as well as an EM-based algorithm.  These studies offer a `sanity check' for our approach and motivate future work for more complex data-generating processes as well as real-world problem settings.

\section{Problem Formulation}
\label{sec:prob_form}

We consider a crowdsourcing system as illustrated in Figure \ref{fig:center}.  In each round, indexed by positive integers $t$, $K$ distinct crowdworkers provide estimates $Y_{t} \in \Re^K$ for an independent outcome $Z_t \in \Re$.  A center observes estimates $Y_t$ to produce a group estimate $\hat{Z}_t \in \Re$ for the outcome $Z_t$.  We assume that the center does not observe the outcomes.

To model uncertainty from the center's perspective, we take $Z_t$ and $Y_t$ to be random variables.  These and all random variables we consider are defined with respect to a probability space $(\Omega, \mathbb{F}, \Pr)$.  We assume that $(Y_t: t \in \mathbb{Z}_{++})$ is exchangeable, and for simplicity, we assume that $\E[Y_t] = 0$.

While we could consider such a process for any sequence of outcomes $Z_t$, we will restrict attention to the case where $Z_t$ is the consensus of an infinite population of crowdworkers, in a sense we now define.  For each $t$, we model crowdworker estimates $(\tilde{Y}_{t,k}: k \in \mathbb{Z}_{++})$ as an exchangeable stochastic process. We assume that the limit $\lim_{K \to \infty} \sum_{k=1}^K \nicefrac{\Tilde{Y}_{t,k}}{K}$ exists almost surely and the consensus is given by
\begin{align}
\label{eq:large_avg_eq_outcome}
Z_t = \lim_{K \to \infty} \sum_{k=1}^K \frac{\Tilde{Y}_{t,k}}{K}.
\end{align}
We define $Z_t$ this way because we believe that such a consensus estimate is often useful -- this is motivated by work on the wisdom of crowds \citep{larrick2006intuitions}.  Another benefit of such a definition is that one can assess performance using observable estimates from out-of-sample crowdworkers without relying on subjective beliefs about the relationship between estimates and outcomes.  The notion of assessment based on out-of-sample crowdworker estimates can be used for hyperparameter tuning, as explained in Appendix~\ref{app:hyperparam_tuning_real}.

We take $Y_t$ to be the first $K$ components $\tilde{Y}_{t,1:K}$.  Hence, each of the $K$ crowdworkers can be thought of as sampled uniformly and independently from the infinite population.  The center's objective is to produce an accurate estimate of the consensus $Z_t$ based on estimates supplied by the finite sample of $K$ crowdworkers.

\begin{figure}
\centering
\includegraphics[scale=0.4]{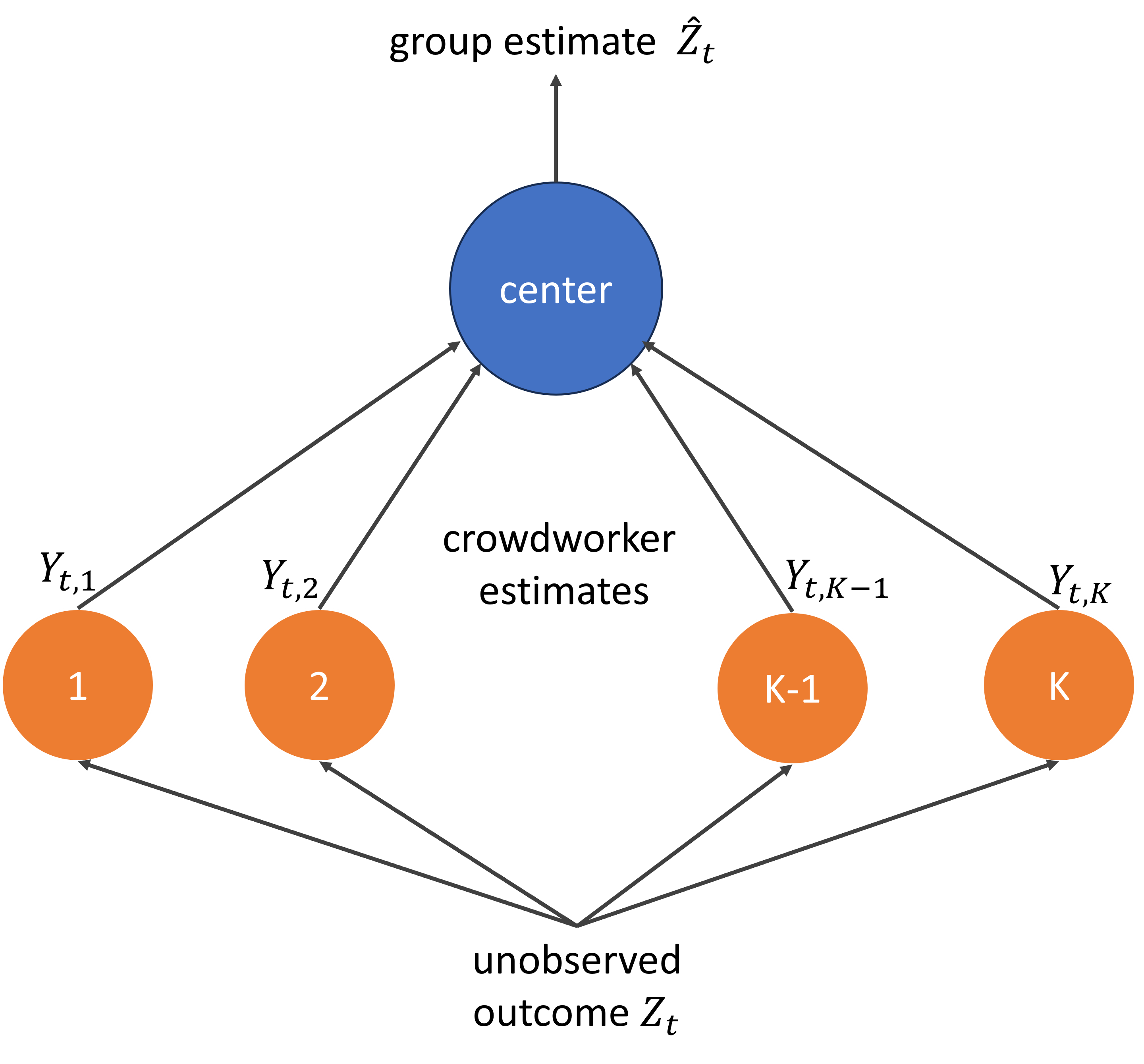}
\caption{Crowdsourcing: each crowdworker provides an estimate of an unobserved quantity $Z_t$, and a center aggregates them to produce a group estimate.}
\label{fig:center}
\end{figure}

Estimates $Y_t$ depend on unknown characteristics of the crowdworkers, such as their skills and relationships.  By de Finetti's Theorem, exchangeability across $t$ implies existence of a random variable $\theta$ such that, conditioned on $\theta$, $(Y_t: t \in \mathbb{Z}_{++})$ is iid.  Intuitively, this latent random variable $\theta$ encodes all relevant information about the crowdworkers and their relative behavior.  While multiple latent variables can serve this purpose, we take $\theta$ to be one that is minimal.  By this we mean that $\theta$ determines the distribution $\Pr(Y_t \in \cdot | \theta)$ and {\it vice versa}.

Before producing a group estimate $\hat{Z}_t$, the center observes past crowdworker estimates $Y_{1:t-1}$ as well as current estimates $Y_t$.  The manner in which the center produces its estimate can be expressed in terms of a {\it policy}, which is a function $\pi$ that identifies a group estimate $\pi(Y_{1:t})$ based on the observed history $Y_{1:t}$ of crowdworker estimates.  When particulars about the policy under discussion matter or when we consider alternatives, we express the dependence of group estimates on the policy using superscripts.  In particular, $\hat{Z}_t^\pi = \pi(Y_{1:t})$.  For each $t$, our objective is to minimize the expected loss $\E[\mathcal{L}(Z_t, \hat{Z}^\pi_t)]$ for some loss function $\mathcal{L}$.  With real-valued group estimates, the squared error $\mathcal{L}(Z_t, \hat{Z}_t^\pi) = (Z_t - \hat{Z}^\pi_t)^2$ offers a natural loss function.

\section{Crowdsourcing Based on Self-Supervised Learning}

This section formally introduces our approach, predict-each-worker. We will do so in three steps.  First, we present the algorithm in an abstract form, which is not efficiently implementable but applies across real-valued estimate distributions.  Then, we describe an implementable instance specialized to a Gaussian data-generating process.  Finally, we explain how our approach extends and scales to complex settings.

\subsection{The General Approach}
\label{sec:jpo_general}

Our approach first learns via self-supervised learning (SSL) to predict each crowdworker's estimate based on the estimates of all other crowdworkers.  The resulting models are then used to produce weights for aggregating crowdworker estimates.  We now describe each of these steps in greater detail.

\subsubsection*{Self-supervised learning}
In the self-supervised learning (SSL) step, we aim to learn patterns exhibited by crowdworkers.  This is accomplished by learning $K$ models, each of which predicts the estimate $Y_{t,k}$ of a crowdworker $k$ given estimates $Y_{t,-k}$ supplied by all other crowdworkers.  Let $\hat{P}^{(k)}_{t-1}$ denote the $k$th model learned from $Y_{1:t-1}$.  Given $Y_{t,-k}$, this model generates a predictive distribution $\hat{P}^{(k)}_{t-1}(\cdot|Y_{t,-k})$ of $Y_{t,k}$.  The intention is for these predictive distributions to accurately estimate a gold standard: the clairvoyant conditional probability $P^{(k)}_{*}(\cdot|Y_{t,-k}) := \Pr(Y_{t,k} \in \cdot | Y_{t,-k}, \theta)$.

As $P^{(k)}_{*}$ can be used to produce a distribution over one crowdworker's estimate given the estimates of all other crowdworkers, clearly, it should depend on patterns among crowdworker estimates.  However, it may not immediately be clear that the set of models $\{P^{(k)}_{*}\}_{k=1}^K$ ought to encode {\it all} observable patterns.  It turns out that this is indeed the case under weak technical conditions.  The following term will be helpful in expressing these conditions.

\begin{definition}[product-form support] 
A probability density function $p$ over $\Re^M$ has {\it product form support} if $\mathrm{support}(p) = \prod_{m=1}^M \mathrm{support}(p_m)$, where $p_m$ is the marginal probability density function of the $m$th component.
\end{definition}
The following theorem establishes sufficiency.
\begin{restatable}[sufficiency of SSL]{theorem}{SSLSufficiency} 
\label{thm:ssl_sufficiency}
If $\Pr(Y_t \in \cdot|\theta)$ is absolutely continuous with respect to the Lebesgue measure and the corresponding density has product-form support, then $\{P^{(k)}_{*}\}_{k=1}^K$ determines $\theta$.
\end{restatable}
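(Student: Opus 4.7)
The plan is to exploit the minimality of $\theta$: since $\theta$ is specified to be in bijection with $\Pr(Y_t \in \cdot \mid \theta)$, it suffices to recover this joint conditional law from $\{P^{(k)}_*\}_{k=1}^K$. Let $p(\cdot \mid \theta)$ denote the density of $\Pr(Y_t \in \cdot \mid \theta)$, let $S_k$ denote the support of its $k$-th marginal so that $\mathrm{support}(p(\cdot\mid\theta)) = \prod_{k=1}^K S_k$ by product-form support, and let $p^{(k)}(y_k \mid y_{-k}, \theta)$ denote the density associated with $P^{(k)}_*(\cdot \mid y_{-k})$. This conditional density is well defined and strictly positive on $S_k$ for every $y_{-k} \in \prod_{j \ne k} S_j$.

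Given these conditionals, I would reconstruct $p(\cdot \mid \theta)$ via a Brook-style telescoping identity. Fix any reference point $y^* \in \prod_k S_k$. For any $y \in \prod_k S_k$, walk from $y^*$ to $y$ one coordinate at a time through hybrid points; the joint densities at successive points telescope to give
\begin{align*}
p(y \mid \theta) = p(y^* \mid \theta) \prod_{k=1}^K \frac{p(y_1^*, \ldots, y_{k-1}^*, y_k, y_{k+1}, \ldots, y_K \mid \theta)}{p(y_1^*, \ldots, y_{k-1}^*, y_k^*, y_{k+1}, \ldots, y_K \mid \theta)}.
\end{align*}
Each factor differs in only the $k$-th coordinate, so dividing numerator and denominator by the joint marginal of the remaining coordinates reduces it to
\begin{align*}
\frac{p^{(k)}(y_k \mid y_1^*, \ldots, y_{k-1}^*, y_{k+1}, \ldots, y_K, \theta)}{p^{(k)}(y_k^* \mid y_1^*, \ldots, y_{k-1}^*, y_{k+1}, \ldots, y_K, \theta)},
\end{align*}
which is determined by $P^{(k)}_*$. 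The remaining constant $p(y^* \mid \theta)$ is then pinned down by the normalization $\int p(y \mid \theta)\,dy = 1$.

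The technical heart of the argument, and the main obstacle, is justifying the telescoping step under only product-form support rather than the stronger assumption of strictly positive density on $\mathbb{R}^K$ that is common in textbook statements of Brook's lemma. Product-form support is doing precisely the needed work: it guarantees that every hybrid point $(y_1^*, \ldots, y_{k-1}^*, y_k, y_{k+1}, \ldots, y_K)$ lies in $\prod_k S_k$ whenever $y, y^* \in \prod_k S_k$, so the marginal that gets cancelled is strictly positive and every conditional that appears is evaluated at a valid argument. Care should also be taken that the construction is insensitive to the choice of reference point $y^*$, which follows by choosing a second point and applying the identity twice. Once the identity is established, the map $\{P^{(k)}_*\}_{k=1}^K \mapsto p(\cdot \mid \theta)$ is explicit, and minimality of $\theta$ upgrades this to $\{P^{(k)}_*\}_{k=1}^K \mapsto \theta$.
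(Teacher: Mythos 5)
Your proposal is correct and follows essentially the same route as the paper: the telescoping identity you derive is exactly Brook's lemma (the paper states it as the Hammersley--Clifford theorem, Lemma~\ref{lem:full_cond_joint_dist}), with product-form support playing the identical role of keeping every hybrid point inside the support so that each conditional ratio is well-defined and nonzero, after which minimality of $\theta$ finishes the argument. The only cosmetic difference is the direction in which you walk the coordinates from the reference point.
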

While we defer the proof for this theorem to Appendix~\ref{app:proof_ssl_sufficiency}, we now provide some intuition for why it should be true.  It is well-known that Gibbs sampling can be used to sample from a joint distribution based only on its ``leave-one-out" conditionals.  Hence, one could use Gibbs sampling to sample from $\Pr(Y_t \in \cdot | \theta)$ -- the joint distribution -- using the corresponding ``leave-one-out" conditionals $\{P_*^{(k)}\}_{k=1}^K$.  This implies that the set $\{P_*^{(k)}\}_{k=1}^K$ determines $\Pr(Y_t \in \cdot | \theta)$.  Because $\theta$ is minimal, in the sense that $\Pr(Y_t \in \cdot | \theta)$ determines $\theta$, $\{P_*^{(k)}\}_{k=1}^K$ determines $\theta$.

\subsubsection*{Aggregation}
\label{sec:gen_agg}

To compute the aggregation weights, we first compute two quantities:
\begin{enumerate}
    \item \textit{Expected Error}. For each $k$, we compute the expected error $\hat{\ell}^{(k)}_t = \V[Y_{t,k} | Y_{t,-k}, P_{*}^{(k)} \gets \hat{P}_{t-1}^{(k)}]$. The expected error tells us how predictable crowdworker $k$ is.
    \item \textit{SSL Gradient.} For each $k$, we compute the mean $\hat{Y}^{(k)}_t = \E[Y_{t,k} | Y_{t,-k}, P_{*}^{(k)} \gets \hat{P}_{t-1}^{(k)}]$. Then, we compute the gradient of $\hat{Y}^{(k)}_t$ with respect to $Y_{t,-k}$. This gradient $\nabla Y_{t}^{(k)}$ estimates how much any other crowdworker contributed to the predictability of crowdworker $k$.
\end{enumerate}
Finally, we compute the aggregation weights $\hat{\nu}_t \in \Re^K$ as follows. For each $k$,
\begin{align}
    \label{eq:agg_formula}
    \hat{\nu}_{t,k} = \overline{v} \frac{1 - \1^\top \nabla \hat{Y}_{t}^{(k)}}{\hat{\ell}^{(k)}_t},
\end{align}
where $\overline{v} \in \Re_{+}$ is a hyperparameter that indicates the prior variance $\V[Z_t]$ of true outcomes.  If the variance is not known, this hyperparameter can be tuned based on observed data.  We further discuss in Appendix~\ref{app:hyperparam_tuning_real} how to tune $\overline{v}$.  In Section~\ref{sec:jpo_gaussian}, we establish that this algorithm is asymptotically optimal for a Gaussian data-generating process. 

Two principles may help in interpreting the aggregation formula:
\begin{enumerate}
    \item \textit{All else equal, estimate $k$ should be assigned greater weight if it is more predictable.} Consider the SSL gradient and error of estimate $k$. Assume we assign an optimal aggregation weight to this estimate.  Fixing the gradient, if the error were to decrease, this aggregation weight ought to increase because skilled crowdworkers tend be more predictable.  Equivalently, error-prone crowdworkers tend to be less predictable. %
    \item \textit{All else equal, estimate $k$ should be assigned less weight if it is more sensitive to estimate $k'$.} Consider the SSL gradient and error of estimate $k$. Assume we assign an optimal aggregation weight to this estimate.  An increase in the $k'$th component of the gradient indicates increased sensitivity of $k$ to $k'$.  This implies an increase in either the covariance with or skill of $k'$.  In either case, this ought to reduce the aggregation weight. %
\end{enumerate}

\subsection{A Concrete Algorithm for Gaussian Data-Generating Processes}
\label{sec:jpo_gaussian}

In this section, we specialize our algorithm to a Gaussian data-generating process and establish that the algorithm is asymptotically optimal.  Recall that $\theta$ represents a minimal expression of all useful information the center can glean from observing crowdworker estimates $Y_{1:\infty}$.  Let $\Delta_t = Y_t-Z_t \1$ denote the vector of crowdworker errors.  By {\it Gaussian data-generating process}, we mean that $Z_{1:\infty}$ is iid Gaussian, $\Delta_{1:\infty}$ is independent from $Z_{1:\infty}$, and, conditioned on $\theta$, $\Delta_{1:\infty}$ is iid zero-mean Gaussian.  To interpret this data-generating process, it can be useful to imagine first sampling $Z_t$, then for each $k$th crowdworker, adding an independent perturbation $\Delta_{t,k}$ to arrive at an estimate $Y_{t,k} = Z_t + \Delta_{t,k}$.  To simplify our analysis, we will further assume that the covariance matrix $\V[\Delta_t]$ is positive definite.  Our methods and analysis can be extended to relax this assumption. 

For any Gaussian process, the distribution of any specific variable conditioned on others is Gaussian with expectation linear in the other variables.  It follows that, for our Gaussian data-generating process, conditioned on $\theta$, the dependence of any $k$th estimate on others is perfectly expressed by a linear model with Gaussian noise.  In particular, for each $k$, $\theta$ determines coefficients $u_*^{(k)} \in \Re^{K-1}$ and a noise variance $\ell_*^{(k)}$ such that
\begin{align}
\label{eq:linear-SSL-model}
Y_{t,k} = (\indiWtOpt)^\top Y_{t,-k} + \eta_t,
\end{align}
with $\eta_t | \theta \sim \normal(0, \indiLossOpt)$.  This linear dependence motivates use of linear SSL models in implementing our approach to crowdsourcing.  With such models, for each $t$ and $k$, the center would produce estimates $(\hat{u}_t^{(k)}, \hat{\ell}_t^{(k)})$ of $(u_*^{(k)}, \ell_*^{(k)})$.

\subsubsection*{Aggregation with Linear SSL Models}

With linear SSL models, our general aggregation formula \eqref{eq:agg_formula} simplifies.  This is because the gradient with respect to $Y_{t,-k}$ is simply $\indiWt$.  Consequently, the aggregation weights satisfy
\begin{align}
\label{eq:agg_formula_gaussian}
\hat{\nu}_{t,k} = \overline{v} \frac{(1 - \1^\top \indiWt)}{\indiLoss},
\end{align}
where, as defined in Section~\ref{sec:gen_agg}, $\overline{v} = \V[Z_t]$.
We explained intuition in Section~\ref{sec:gen_agg} to motivate our general aggregation formula.  For the special case of a Gaussian data-generating process, it is easy to corroborate this intuition with a formal mathematical result, which we now present.

No policy can produce a better estimate than $\E[Z_t | Y_t, \theta]$, which benefits from knowledge of $\theta$.  In the Gaussian case, it is easy to show this estimate is attained by the weight vector
\begin{align}
\nu_* = \overline{v} S_*^{-1}\1,
\end{align}
where $S_*$ is the covariance matrix of $Y_t$ conditioned on $\theta$.  The following result, which is proved in Appendix~\ref{app:opt_wts_opt_ssl_params}, offers an alternative characterization of $\nu_*$ that shares the form of Equation \ref{eq:agg_formula_gaussian}.
\begin{restatable}{theorem}{optWtsOptSSLParams}
\label{thm:opt_wts_opt_ssl_params}
For the Gaussian data-generating process, for each $k \in \{1,\cdots,K\}$, 
\begin{align*}
\nu_{*,k} = \overline{v}\frac{1 - \1^\top \indiWtOpt}{\indiLossOpt}.
\end{align*}
\end{restatable}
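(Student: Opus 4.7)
The plan is to reduce the identity to a standard block-inverse (Schur complement) computation for the precision matrix of a jointly Gaussian vector. First I would justify the form of $\nu_*$ asserted just before the theorem: conditional on $\theta$, the pair $(Y_t, Z_t)$ is jointly Gaussian, and because $Y_t = Z_t\1 + \Delta_t$ with $\Delta_t$ independent of $Z_t$, we have $\mathrm{Cov}(Y_t, Z_t \mid \theta) = \overline{v}\1$. Hence the clairvoyant regression is $\E[Z_t \mid Y_t, \theta] = \overline{v}\1^\top S_*^{-1} Y_t$, and the optimal weight vector is $\nu_* = \overline{v} S_*^{-1}\1$. The theorem therefore reduces to showing that the $k$th entry of $S_*^{-1}\1$ equals $(1 - \1^\top u_*^{(k)})/\ell_*^{(k)}$.

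The key step is to identify $u_*^{(k)}$ and $\ell_*^{(k)}$ in terms of blocks of $S_*$. Partition $S_*$ so as to separate row/column $k$ from the remaining indices. Standard Gaussian conditioning gives
\begin{align*}
\E[Y_{t,k} \mid Y_{t,-k}, \theta] = S_{k,-k} S_{-k,-k}^{-1} Y_{t,-k}, \qquad \V[Y_{t,k} \mid Y_{t,-k}, \theta] = S_{kk} - S_{k,-k} S_{-k,-k}^{-1} S_{-k,k}.
\end{align*}
Matching this against the linear SSL representation~\eqref{eq:linear-SSL-model} uniquely identifies $u_*^{(k)} = S_{-k,-k}^{-1} S_{-k,k}$ and $\ell_*^{(k)} = S_{kk} - S_{k,-k} S_{-k,-k}^{-1} S_{-k,k}$. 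The Schur-complement formula for $\Omega := S_*^{-1}$ then yields the $k$th row, $\Omega_{kk} = 1/\ell_*^{(k)}$ and $\Omega_{k,-k} = -(u_*^{(k)})^\top / \ell_*^{(k)}$; summing against $\1$ gives $(\Omega\1)_k = (1 - \1^\top u_*^{(k)})/\ell_*^{(k)}$, and multiplying by $\overline{v}$ delivers the claim.

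The only real obstacle is bookkeeping. I need to check that the assumption $\V[\Delta_t \mid \theta] \succ 0$ together with $\overline{v} \ge 0$ gives $S_* = \overline{v}\1\1^\top + \V[\Delta_t \mid \theta] \succ 0$, so that all inverses above exist, and I need to be careful about the row-versus-column convention in the Schur-complement formula so that the transpose lands on the correct factor when reading off $\Omega_{k,-k}$. Everything else is a direct substitution, so the proof should be short.
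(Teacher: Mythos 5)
Your proposal is correct and follows essentially the same route as the paper: the paper's Appendix~\ref{app:opt_wts_opt_ssl_params} also identifies $u_*^{(k)}$ and $\ell_*^{(k)}$ as the coefficients and residual variance of the regression of $Y_{t,k}$ on $Y_{t,-k}$ and then reads off the rows of $S_*^{-1}$ via the block-inverse (Schur complement) formula, packaged there as Lemma~\ref{lem:cov_linear_reg}. The only difference is cosmetic --- you additionally spell out the derivation of $\nu_* = \overline{v}S_*^{-1}\1$ and the positive-definiteness of $S_*$, which the paper asserts without proof.
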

\noindent From this result, we see that if the linear SSL model parameters $(\hat{u}_t^{(k)}, \hat{\ell}_t^{(k)})$ converge to $(u_*^{(k)}, \ell_*^{(k)})$ then the aggregation weights $\hat{\nu}_t$ converge to $\nu_*$.

\subsubsection*{Fitting Linear SSL Models}

The center can generate the estimates $(\hat{u}_t^{(k)}, \hat{\ell}_t^{(k)})$ of coefficients and variances via Bayesian linear regression (BLR). 
 This is accomplished by minimizing the loss function
\begin{align}
\label{eq:ssl_loss}
    f^{(k)}_{Y_{1:t-1},\regWt,\regVar,\indiWtPrior,\indiLossPrior}(u, \ell) 
    = & \underbrace{\sum_{\tau < t} \frac{(Y_{\tau, k} - u^\top Y_{\tau, -k})^2}{2 \ell} + \frac{\log(\ell)}{2}}_{\mathrm{log\ likelihood}} + \underbrace{ \phi_{\regWt, \indiWtPrior}(u, \ell) + \varphi_{\regVar, \indiLossPrior}(\ell)}_{{\rm regularization}},
\end{align}
to obtain an estimate $(\indiWtOpt, \indiLossOpt)$.  The regularization term can be interpreted as inducing a prior, which is specified by four hyperparameters: $\regWt \in \mathcal{S}^K_{+}$, $\regVar \in \Re_{+}$, $\indiWtPrior \in \Re$, $\indiLossPrior \in \Re_{+}$. The term $\phi_{\regWt, \indiWtPrior}(\cdot, \ell)$ is the negative log density of a Gaussian random variable with mean $\indiWtPrior \1$ and covariance matrix $\ell \Lambda^{-1}$, while $\varphi_{\regVar, \indiLossPrior}(\cdot)$ is the negative log density of an inverse gamma distribution with shape parameter $\nicefrac{\regVar}{2}$ and scale parameter $\nicefrac{(\regVar + K + 1) \indiLossPrior}{2}$. Note that each of the $K$ models shares the same hyperparameters. This means that, before the center has observed any data, each crowdworker will receive equal aggregation weight.

While BLR produces the posterior distribution over weights for each of the $K$ models, it ignores interdependencies across models.  In the asymptotic regime of large $t$, this is inconsequential because the posterior concentrates.  However, when $t$ is small, this imperfection induces error.  To ameliorate this error, we introduce a form of aggregation weight regularization.  Let $\Tilde{\nu}_1 \in \Re^K$ be a vector of equal weights produced by our algorithm if applied before learning from any data.  For any $t$, we generate aggregation weights by taking a convex combination $\hat{\nu}_{t,k} = \gamma_t \Tilde{\nu}_{1,k} + (1 - \gamma_t) \Tilde{\nu}_{t,k}$, where $\Tilde{\nu}_{t,k}$ is the vector of unregularized aggregation weights. The parameter $\gamma_t$ decays with $t$. Specifically, $\gamma_t = \nicefrac{r}{(r+t-1)}$, where  $r$ is a hyperparameter governs the rate of decay.  All steps we have described to produce an estimate $\hat{Z}_t$ are presented in Algorithm~\ref{alg:jpo_gaussian}.

\begin{algorithm}
\caption{Predict-each-worker for Gaussian data generating process} \label{alg:jpo_gaussian}
\begin{algorithmic}[1]
\Procedure{PredictEachWorkerGaussian}{$Y_{1:t}$, $\regWt$, $\regVar$, $\indiWtPrior$, $\indiLossPrior$, $r$}
  \State // SSL
    \For{$k \in \{1,\cdots,K\}$}
        \State $\indiWt, \indiLoss \in \argmin_{u \in \Re^{N-1}, \ell \in \Re_+} f_{Y_{1:t-1}, \regWt, \regVar, \indiWtPrior, \indiLossPrior}^{(k)}(u, \ell)$
    \EndFor
  \State 
  \State // Aggregation
    \State $\gamma = \nicefrac{r}{r+t-1}$
    \For{$n \in \{1,\cdots,K\}$}
        \State $\tilde{\nu}_{t,k} \gets \nicefrac{(1 - \mathbf{1}^\top \indiWt)}{\indiLoss}$ 
        \State $\tilde{\nu}_{1,k} \gets \nicefrac{(1 - (K-1) \overline{u})}{\overline{\ell}}$
        \State $\hat{\nu}_{t,k} \gets \gamma \Tilde{\nu}_{1,k} + (1-\gamma) \Tilde{\nu}_{t,k}$
    \EndFor
    \State $\hat{Z}_t \gets \left(\hat{\nu}_t\right)^\top Y_t$
    \State
    \State \textbf{Return} $\hat{Z}_t$
\EndProcedure
\end{algorithmic}
\end{algorithm}

The following result formalizes our claim of asymptotic optimality.
\begin{restatable}{theorem}{ConsistencyThm}
\label{thm:consistency_thm}
If $\Lambda \succ 0$, $\indiLossPrior>0$, and $\regVar \geq 0$, then, for any $\tau \in \mathbb{Z}_{++}$,
\begin{align*}
\lim_{t \to \infty} \Big|\underbrace{\hat{\nu}_t^\top Y_t}_{\mathrm{estimate}}  - \underbrace{\E[Z_t | Y_{1:\infty}]}_{\mathrm{clairvoyant}}\Big| \overset{{\rm a.s.}}{=} 0.
\end{align*}
\end{restatable}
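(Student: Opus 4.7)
\emph{Step 1: Reducing the clairvoyant target.} The plan is first to identify $\E[Z_t \mid Y_{1:\infty}]$ with $\nu_*^\top Y_t$. Conditional on $\theta$, the sequence $(Y_s)_{s\ge 1}$ is iid; applying the strong law to empirical moments recovers $\Pr(Y_t\in\cdot\mid\theta)$, which by minimality of $\theta$ determines $\theta$. Hence $\theta$ is $\sigma(Y_{1:\infty})$-measurable. Combined with the conditional independence of $Z_t$ from $Y_{-t}$ given $\theta$,
\begin{align*}
\E[Z_t \mid Y_{1:\infty}] = \E[Z_t \mid Y_t, \theta] = \nu_*^\top Y_t,
\end{align*}
where the second equality uses joint Gaussianity of $(Z_t, Y_t)\mid\theta$ and the explicit form $\nu_* = \overline{v} S_*^{-1}\1$. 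It therefore suffices to show $(\hat\nu_t - \nu_*)^\top Y_t \overset{\text{a.s.}}{\to} 0$.

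\emph{Step 2: Consistency of the SSL parameters.} Next I would prove $(\hat u_{t-1}^{(k)}, \hat\ell_{t-1}^{(k)}) \to (u_*^{(k)}, \ell_*^{(k)})$ a.s.\ for each $k$. Conditional on $\theta$, Equation~\ref{eq:linear-SSL-model} is a well-specified Gaussian regression with iid covariates and iid noise; the regularizer amounts to a normal-inverse-gamma prior conjugate to the likelihood, so the minimizer of $f^{(k)}_{Y_{1:t-1},\Lambda,\lambda_\ell,\overline{u},\overline{\ell}}$ admits the closed form
\begin{align*}
\hat u_{t-1}^{(k)} = (\Lambda + G_{t-1})^{-1}\bigl(\Lambda\overline{u}\1 + g_{t-1}\bigr),
\end{align*}
with $G_{t-1} = \sum_{\tau<t} Y_{\tau,-k}Y_{\tau,-k}^\top$, $g_{t-1} = \sum_{\tau<t} Y_{\tau,k}Y_{\tau,-k}$, and $\hat\ell_{t-1}^{(k)}$ given analogously. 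Dividing by $t-1$, the prior contributions are $O(1/t)$ while the strong law (conditional on $\theta$) yields $G_{t-1}/(t-1)\to \Sigma_{-k} := \E[Y_{t,-k}Y_{t,-k}^\top\mid\theta]$ and $g_{t-1}/(t-1)\to \Sigma_{-k}\,u_*^{(k)}$. Positive definiteness $\V[\Delta_t\mid\theta] \succ 0$ forces $\Sigma_{-k} \succ 0$, so $\hat u_{t-1}^{(k)} \to u_*^{(k)}$ a.s.; a parallel SLLN argument on squared residuals gives $\hat\ell_{t-1}^{(k)} \to \ell_*^{(k)} > 0$ a.s.

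\emph{Step 3: Assembling the final estimate.} Substituting the limits into Theorem~\ref{thm:opt_wts_opt_ssl_params} and invoking continuity of $(u,\ell)\mapsto (1-\1^\top u)/\ell$ at $(u_*^{(k)},\ell_*^{(k)})$ gives $\tilde\nu_{t,k}\to \nu_{*,k}$ a.s. Because $\gamma_t = r/(r+t-1) \to 0$ and $\tilde\nu_1$ is a fixed vector, $\hat\nu_t \to \nu_*$ a.s. To control the inner product against the current-round data, I would combine the BLR rate $\|\hat\nu_t - \nu_*\| = O(\sqrt{\log\log t / t})$ (law of the iterated logarithm applied to $G_{t-1}$ and $g_{t-1}$) with the sub-Gaussian tail bound $\|Y_t\| = O(\sqrt{\log t})$ a.s.\ (Borel--Cantelli). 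Cauchy--Schwarz then yields $(\hat\nu_t - \nu_*)^\top Y_t \to 0$ a.s., completing the argument.

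\emph{Expected main obstacle.} The principal technical step is the almost-sure consistency of the MAP estimator, in particular disentangling the $u$--$\ell$ coupling introduced by the normal-inverse-gamma regularizer (whose prior covariance for $u$ is proportional to $\ell$). Leveraging conjugacy to obtain the closed forms above isolates this coupling into a single matrix inversion and reduces the analysis to term-by-term SLLN, with positive definiteness of $\Lambda$ guaranteeing invertibility uniformly in $t$. A secondary subtlety is that $\|Y_t\|$ is unbounded in $t$, so mere consistency of $\hat\nu_t$ is insufficient: the rate estimate above is what converts the parameter limit into a limit for the product $\hat\nu_t^\top Y_t$.
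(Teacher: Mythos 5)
Your proposal follows the same overall architecture as the paper's proof: reduce the clairvoyant target to $\nu_*^\top Y_t$, establish almost-sure consistency of the Bayesian linear regression estimates $(\indiWt,\indiLoss)$ via the closed-form minimizer and the strong law (this is exactly the paper's Lemma on BLR convergence, including the Hessian/positive-definiteness argument implicit in your conjugacy reduction), and then pass through Theorem~\ref{thm:opt_wts_opt_ssl_params} and the vanishing of $\gamma_t$ to conclude $\hat\nu_t \to \nu_*$. Where you genuinely depart from the paper is the last step. The paper asserts that it ``suffices to prove that $\lim_{t\to\infty}\hat\nu_{t}=\nu_*$ almost surely'' and stops there, implicitly treating convergence of the weights as equivalent to convergence of $\hat\nu_t^\top Y_t - \nu_*^\top Y_t = (\hat\nu_t-\nu_*)^\top Y_t$. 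Since $Y_t$ is a fresh draw each round and $\sup_{s\le t}\|Y_s\|$ grows like $\sqrt{\log t}$, this implication is not automatic, and your rate argument --- law of the iterated logarithm for the empirical second moments giving $\|\hat\nu_t-\nu_*\| = O(\sqrt{\log\log t/t})$, a Borel--Cantelli bound $\|Y_t\|=O(\sqrt{\log t})$, and Cauchy--Schwarz --- is precisely what is needed to close that gap. (The stray ``for any $\tau$'' in the theorem statement suggests the authors may have intended to evaluate the weights against a fixed $Y_\tau$, in which case their shorter argument would suffice; for the statement as displayed, your version is the more careful one.) Your Step 1 justification that $\theta$ is $\sigma(Y_{1:\infty})$-measurable is also spelled out more explicitly than in the paper, which simply asserts $\E[Z_t\mid Y_{1:\infty}]=\nu_*^\top Y_t$.
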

We defer the proof to Appendix~\ref{app:consistency_thm}. While this theorem establishes asymptotic optimality, we empirically study finite-data performance in Section~\ref{sec:sim_study}.

\subsection{Extensions to Complex Settings using Neural Networks} 
\label{sec:nn_exts}

Before conducting an empirical study with the Gaussian data-generating process, we first discuss how our general approach extends beyond the Gaussian data-generating process.  In particular, with neural network SSL models, our approach can capture complex characteristics of and relationships among crowdworkers.  Moreover, with neural network models, our approach extends to accommodate scenarios with missing estimates or where additional context is provided, for example, a language model prompt or crowdworker profile.

Concretely, this process involves training $K$ neural networks, where the $k$th network predicts the $k$th crowdworker using others' estimates.  Additional information, such as a context vector, can just be appended to the neural network's input.  Training can be done via standard supervised learning algorithms, such as stochastic gradient descent (SGD). 

To reduce computational demands, rather than training $K$ separate models, we can train one neural network with $K$ so-called output heads.  The $k$th head would output an estimate of the mean and variance of the $k$th crowdworker. The input to the neural network is the vector of estimates with the $k$th crowdworker masked out.  When training, we would use SGD and randomly choose which crowdworker to mask at each SGD step. This training procedure, where we try and predict a masked out component, is prototypical of self-supervised learning.

\subsubsection*{Incorporating Context}

In practice, crowdworkers respond to specific inputs or contexts, such as images or text from a language model.  To extend to such settings, we can transform the context into a vector.  Such a transformation is naturally available for images. With text, we can use embedding vectors, which can be generated using established techniques \citep{kenter2015short, mars2022word}.  By appending such context vectors to crowdworker estimates, the SSL step can learn context-specific patterns.  For example, if we learn that certain crowdworkers are more skilled for particular types of inputs, it should be possible to produce better group estimates.

\subsubsection*{Including Crowdworker Metadata}

Metadata about crowdworkers, like their educational background or expertise, can be used to improve group estimates.  For example, if a crowdworker's metadata takes the form of a feature vector, our approach can leverage this data.  By concatenating crowdworker estimates with their feature vectors, the neural network can then account for their distinguishing characteristics and not merely the estimates that they provide.  This can prove particularly beneficial when managing a transient crowdworker pool, as it allows us to generalize quickly to previously unseen crowdworkers.

\subsubsection*{Addressing Missing Estimates}

With a sizable pool of crowdworkers, each task might only be assigned to a small subset. 
 As such, each crowdworker may or may not supply an estimate for each $t$th outcome.  To learn from incomplete data gathered in this mode, we can use a neural network architecture where the input associated with each crowdworker indicates whether the crowdworker has supplied an estimate, and if so, what that estimate is.  Through this approach, the neural network can discern intricate patterns and interdependencies among the crowdworkers, such as when the significance of one's input may be contingent on the absence of another due to overlapping expertise.

\subsubsection*{Aggregating the Predictions}

The general aggregation formula presented in Section~\ref{sec:gen_agg} applies to neural network models.  For each crowdworker $k$, we first use the neural network to estimate the mean and variance of their prediction.  Then, we compute the gradient of the mean with respect to $Y_{t,-k}$. Using the estimated variance and the gradient, \eqref{eq:agg_formula} produces aggregation weights.

We primarily designed our approach for problem settings where outcomes are not observed. However, this approach can be extended to settings where some or all outcomes are observed. Please see Appendix~\ref{app:ext_obs_outcomes} for a brief discussion.

\section{A Simulation Study}
\label{sec:sim_study}

To understand how our algorithm compares to benchmarks, we perform a simulation study. We use a Gaussian data-generating process and three benchmarks: averaging, an EM-based policy, and the clairvoyant policy.  The section first introduces the data-generating process, followed by the three policies, and finally presents the simulation results.

\subsection{A Gaussian Data-Generating Process}

In a prototypical crowdsourcing application, some crowdworkers may be more skilled than others. Further, because the crowdworkers may share information sources, their errors may end up being correlated. We consider a model that is simple yet captures these characteristics.  For each crowdworker $k \in \mathbb{Z}_{++}$, this model generates a sequence of estimates according to
\begin{align*}
\Tilde{Y}_{t,k} = \tilde{Z}_t + \Tilde{\Delta}_{t,k}.
\end{align*}
where $\tilde{Z}_t \sim \normal(0,1)$, and $\Tilde{\Delta}_{t,k}$ is additive noise that is independent from $\tilde{Z}_t$. The additive noise is generated as follows:
\begin{align*}
\Tilde{\Delta}_{t,k} = \sum_{n=1}^N C_{k,n} X_{t,n},
\end{align*}
where $(X_t: t \in \mathbb{Z}_{++})$ is an iid sequence of standard Gaussian vectors, and $C_k \in \Re^N$ modulates the influence of $X_t$ on $\tilde{Y}_{t,k}$.  It may be helpful to think of each component $X_{t,n}$ as a random factor and the coefficient $C_{k,n}$ as a factor loading.

For each crowdworker $k \in \mathbb{Z}_{++}$ and factor index $n \in \{1, \dots, N \}$, we sample $C_{k,n} \sim \normal(0,n^{-q})$. Each $C_{k,n}$ is sampled independently from $\tilde{Z}_t$, the factor $X_t$, and any other coefficient $C_{k',n'}$, where $(k', n')\neq (k,n)$. %

We show in Appendix~\ref{app:data_gen_prob_form_asmp} that this data-generating process satisfies the three assumptions of our general problem formulation, as presented in Section~\ref{sec:prob_form}. 
 In particular, $(\tilde{Y}_t: t \in \mathbb{Z}_{++})$ is exchangeable, and for each $t$, $(\tilde{Y}_{t,k}: k \in \mathbb{Z}_{++})$ is exchangeable, with a sample mean that converges almost surely.  It is easy to verify that the sample mean $Z_t = \lim_{K \rightarrow \infty} \sum_{k=1}^K \tilde{Y}_{t,k} / K$ is almost surely equal to $\tilde{Z}_t$.

Recall that we denote by $\theta$ a minimal random variable conditioned on which $(\tilde{Y}_t: t \in \mathbb{Z}_{++})$ is iid.  For any $K$, conditioned on $\theta$, $\Tilde{\Delta}_{t,1:K}$ and $\Tilde{Y}_{t,1:K}$ are distributed Gaussian. It is helpful to consider the covariance matrix of $\Tilde{\Delta}_{t, 1:K}$ conditioned on $\theta$, which we denote by $\Sigma_*$.  Each diagonal element of $\Sigma_*$ expresses the reciprocal of a crowdworker's skill.  Off-diagonal elements indicate how noise covaries across crowdworkers.  

At this point, this data-generating process can feel abstract and non-intuitive.  In Section~\ref{sec:benchmarks}, after we define a few benchmarks, we will provide some intuition about our data-generating process using these benchmarks.

\subsection{Benchmarks}
\label{sec:benchmarks}

We compare our method against three other methods: averaging, a clairvoyant policy, and an EM-based policy.

\subsubsection*{Averaging}

Our first benchmark averages across crowdworker estimates to produce a group estimate $\pi(Y_{1:t}) = \frac{1}{K} \sum_{k=1}^K Y_{t,k}$, where $Y_t=\tilde{Y}_{t,1:K}$. For the Gaussian data-generating process, group estimates produced in this manner converge to $Z_t$ almost surely as $K$ grows. 

\subsubsection*{Clairvoyant Policy}

The clairvoyant policy offers an upper bound on the performance. This policy has access to $\theta$, which encapsulates all useful information that can be garnered from any number of observations $Y_1, Y_2, \ldots$.  Equivalently, the policy has access to $\Pr(Y_t \in \cdot | \theta)$, and it produces a group estimate $\E[Z_t | Y_t,\theta]$. For the Gaussian data-generating process, this means having access to noise covariance matrix $\Sigma_*$ introduced earlier.  In particular, the group estimate can be written as $\E[Z_t | Y_t,\theta]=\E[Z_t | Y_t,\Sigma_*]=\nu_*^\top Y_t$.  This attains mean-squared error 
\begin{align*}
\E[(Z_t - \nu_*^\top Y_t)^2] = \E\left[\frac{1}{1 + \1^\top \Sigma_*^{-1} \1}\right].
\end{align*}
This level of error is lower than what is achievable by any implementable policy, which would not have the privileged access to $\Sigma_*$ that is granted to the clairvoyant policy.  Yet this represents a useful target in policy design as this level of error is approachable as $t$ grows.  

Having defined averaging and clairvoyant policies, we will now provide some perspective on the benefits of clairvoyance or, equivalently, learning from many interactions with the same crowdworkers.  In Figure~\ref{fig:others_vs_sa}, we compare these two policies and a third, which we refer to as only-skills-clairvoyant.  As the name suggests, in order to generate aggregation weights, only-skills-clairvoyant uses diagonal elements (inverse-skill) of $\Sigma_*$ but not the off-diagonal elements (error covariances).  The clairvoyant policy offers the greatest benefit over averaging, though only-skills-clairvoyant also offers some benefit.  For example, to match the performance attained by averaging over one hundred crowdworkers, the clairvoyant policy requires about twenty crowdworkers, while only-skills-clairvoyant needs about seventy. 
 Moreover, the concavity of the plot indicates that the percentage reduction afforded by clairvoyance in the number of crowdworkers required grows with the desired level of performance.  Hence, for a data-generating process like ours, it is beneficial to learn leverage patterns among crowdworker estimates.

\begin{figure}[htp]
    \centering
    \includegraphics[width=0.5\textwidth]{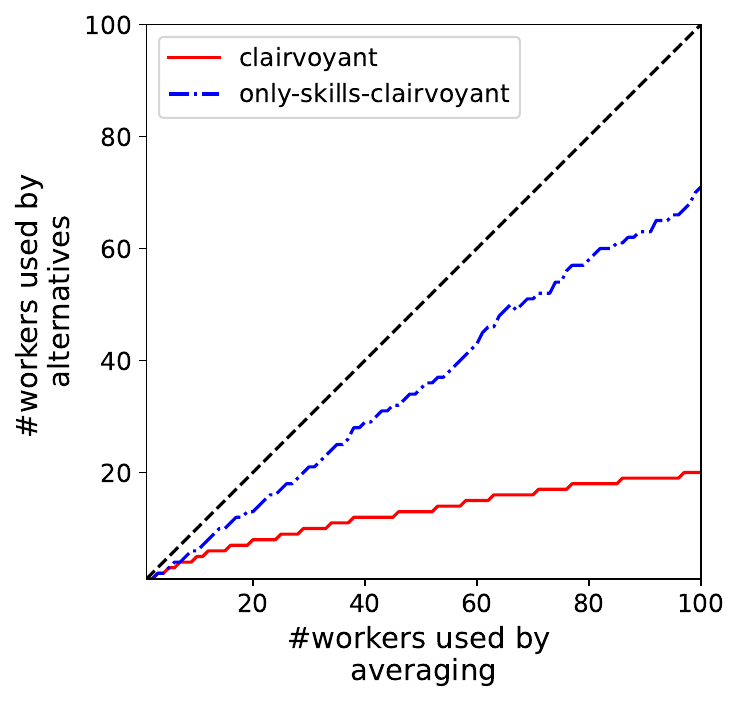}
    \caption{The number of crowdworkers required by clairvoyant and only-skills-clairvoyant policies to match the performance of averaging over various numbers of crowdworkers.  These results were generated using the Gaussian data-generating process with $N=1000$ factors and factor concentration parameter $q=1.7$.  Performance was measured in terms of mean-squared error.}
  \label{fig:others_vs_sa}
\end{figure}

\subsubsection*{EM-Based Policy}

We now discuss a policy based on the EM framework. At a high level, EM iterates between estimating the covariance matrix $\Sigma_*$ and estimating past outcomes $Z_{1:t-1}$. Upon termination of this iterative procedure, the estimate $\hat{\Sigma}$ of $\Sigma_*$ is used to produce a group estimate $\E[Z_t | Y_t,\Sigma_* \leftarrow \hat{\Sigma}]$.  Here, $\Sigma_* \leftarrow \hat{\Sigma}$ indicates that the expectation is evaluated as though $\hat{\Sigma}$ were the realized value of $\Sigma_*$.

Each iteration of the EM algorithm proceeds in two steps.  In the first step, the algorithm %
computes values $\hat{z}_{1:t-1}$ and $\hat{v}_{1:t-1}$, which represent estimates of $\E[Z_{1:t-1} | Y_{1:t-1}]$ and $\V(Z_{1:t-1}| Y_{1:t-1})$, respectively.  These estimates are generated as though the estimated covariance matrix $\hat{\Sigma} \in \mathcal{S}^K_{++}$, from the previous iteration, is $\Sigma_*$; in particular, $\hat{z}_{1:t-1} \leftarrow \E[Z_{1:t-1} | Y_{1:t-1}, \Sigma_* = \hat{\Sigma}]$ and $\hat{v}_{1:t-1} \leftarrow \V[Z_{1:t-1} | Y_{1:t-1}, \Sigma_* = \hat{\Sigma}]$.  The initial estimate of $\Sigma_*$ is taken to be $\overline{\Sigma}$, which is a hyperparameter that can be interpreted as the prior mean for $\Sigma_*$. 

In the second step, estimates $\hat{z}_{1:t-1}$ and $\hat{v}_{1:t-1}$, from the previous step, are used to compute an approximation of the log posterior of $\Sigma_*$. This approximation $g_{y,\overline{\Sigma},c,\hat{z}, \hat{v}}(\hat{\Sigma})$ is known as the evidence lower bound (ELBO). The goal is to find the matrix $\hat{\Sigma}$ that maximizes the ELBO. For the Gaussian data-generating process, in particular, the ELBO is 
\begin{align*}
g_{y,\overline{\Sigma},c,\hat{z}, \hat{v}}(\hat{\Sigma})
=& 
- (c + 2K + 1 + t) \log(|\hat{\Sigma}|)-\text{trace}(c \overline{\Sigma} \hat{\Sigma}^{-1}) \\
&- \sum_{\tau=1}^{t-1} \left((y_\tau - \hat{z}_\tau \1 )^\top \hat{\Sigma}^{-1} (y_\tau - \hat{z}_\tau \1) + \hat{v}_{\tau}\1^\top \hat{\Sigma}^{-1} \1 +2 \mathbf{d}_{\mathrm{KL}}\left(\normal(\hat{z}_\tau, \hat{v}_\tau)||\normal(0,1)\right) \right),
\end{align*}
where $\mathbf{d}_{\mathrm{KL}}\left(\normal(\hat{z}_\tau, \hat{v}_\tau)||\normal(0,1)\right)$ is the Kullback-Leibler divergence between
distributions $\normal(\hat{z}_\tau, \hat{v}_\tau)$ and $\normal(0, 1)$. The scalar $c$ is a hyperparameter that expresses the concentration of the prior distribution of $\Sigma_*$. It is easy to verify that the maximizer of $g_{y,\overline{\Sigma},c,\hat{z}, \hat{v}}$ admits a closed-form solution, and this maximizer is stated in step~\ref{map_via_em:compute_sigma} of Algorithm~\ref{alg:map_via_em}.  %

The iterative EM process continues until either of two stopping conditions is satisfied.  The first stopping condition is triggered when consecutive estimates $\hat{z}_{1:t-1}$ and $\hat{z}'_{1:t-1}$ are sufficiently similar, in the sense that $\| \hat{z}_{1:t-1} - \hat{z}_{1:t-1}' \|^2/(t-1) < \epsilon$.  This is a natural stopping condition, as it represents a measure of near-convergence.  The second stopping condition places a cap on the number of iterations.  This prevents the algorithm from looping indefinitely. The EM steps are presented concisely in Algorithm \ref{alg:map_via_em}.

\subsubsection*{Selecting EM Hyperparameters}

In our experiments, we set $\epsilon=10^{-10}$ and $M=10,000$. For each value of $K$ and $t$, we perform a grid search to choose the values of $\overline{\Sigma}$ and $c$. For each value in the grid search, we run the EM algorithm over multiple seeds, where each seed corresponds to one train set. We choose the values of $\overline{\Sigma}$ and $c$ that result in the highest estimate of MSE (estimation procedure discussed in Appendix~\ref{app:mse_aw}). We let $\overline{\Sigma}$  be a matrix with diagonal elements $\overline{\sigma}^2$ and correlation $\overline{\rho}$. We vary $\overline{\sigma}^2 \in \{ 0.2, 2, 20 \}$ and $\overline{\rho} \in \{ 0, 0.1 \}$. We vary $c \in \{0.1, 1, 10 \}$.

\begin{algorithm}
\caption{expectation-maximization-based policy} \label{alg:map_via_em}
\begin{algorithmic}[1]
\Procedure{EM\_Aggregation}{$Y_{1:t}$, $\overline{\Sigma}$, $c$, $\epsilon$, $M$}
\State $\hat{\Sigma} \gets \overline{\Sigma}$ \label{map_via_em:sigma_init}
\State $\hat{z} \gets \infty$
\State $m \gets 0$
\Repeat
\State $\hat{z}' \gets \hat{z}$
\For{$\tau \in \{1,\cdots,t-1\}$}
\State $\hat{z}_\tau \gets \frac{\1^\top \hat{\Sigma}^{-1} Y_\tau}{1 + \1^\top\hat{\Sigma}^{-1}  \1}$ \label{map_via_em:zbar}
\State $\hat{v}_\tau \gets \frac{1}{1 + \1^\top \hat{\Sigma}^{-1} \1}$ \label{map_via_em:vbar}
\EndFor
\State $\hat{\Sigma} \gets \frac{c \overline{\Sigma} + \sum_{\tau<t} (Y_\tau - \hat{z}_\tau)(Y_\tau - \hat{z}_\tau)^\top + \sum_{\tau<t} \hat{v}_\tau \1\1^\top}{c + 2K + t + 1}$ \label{map_via_em:compute_sigma}
\State $m \gets m+1$
\Until{$\| \hat{z}' - \hat{z} \|^2/(t-1) < \epsilon$ or $m = M$}
\State $\hat{Z}_t \gets \left( \hat{\Sigma} + \1\1^\top \right)^{-1} Y_t$
\State $\text{Return } \hat{Z}_t$
\EndProcedure
\end{algorithmic}
\end{algorithm}

\subsection{Performance Comparisons}

We now compare our method to the three benchmarks described before.  We do so using the Gaussian data-generating process, with $N=1000$ factors and factor concentration parameter $q=1.7$.  This choice of parameters results in expected noise variance $\E[\Sigma_{*,k,k}] \approx 2$.   For each policy $\pi$, we use mean squared error, defined $\E[(Z_t - \hat{Z}_t^\pi)^2]$, as the evaluation metric.

To study how robust our algorithm is to the number of crowdworkers and the dataset size, we evaluate performance for $K \in \{ 10, 20, 30 \}$ crowdworkers and multiple dataset sizes $t \in \{1, K, 10 \times K, 100 \times K, 1000 \times K \}$. We plot results in Figure~\ref{fig:perf_comp}.

We find that predict-each-worker, our algorithm outperforms averaging for all $K$ and $t$.  Moreover, our algorithm performs equally well compared to the clairvoyant, as well as EM policies, for large $t$.  This corroborates Theorem \ref{thm:consistency_thm}.  For smaller values of $t$, EM offers some advantage, though, in the extreme case of $t=1$, our algorithm again is competitive with EM. The details of how the hyperparameters of predict-each-worker are tuned can be found in Appendix~\ref{app:jpo_tuning}.

Our results demonstrate that, like EM, predict-each-worker performs as well as possible as the dataset grows.  Another important feature, as discussed in Section~\ref{sec:nn_exts}, is that predict-each-worker can learn from complex patterns using neural network models.  Taken together, these two features make predict-each-worker an attractive choice relative to EM.

\begin{figure}[htp]
    \begin{minipage}{0.45\textwidth}
        \centering
        \includegraphics[width=\textwidth]{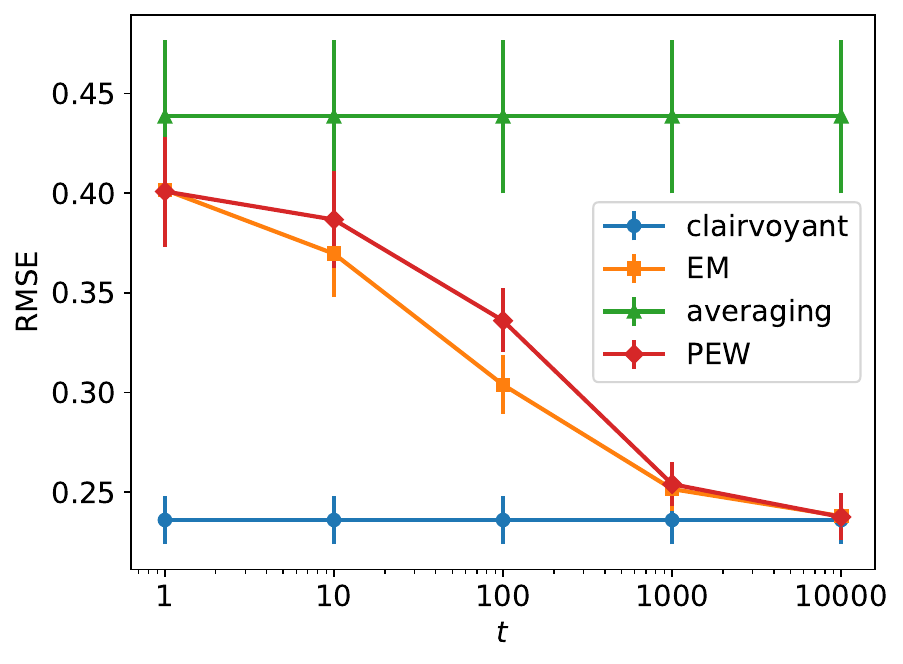}
        \subcaption{$K=10$}
        \label{fig:k_10_large}
    \end{minipage}
      \hfill
    \begin{minipage}{0.45\textwidth}
        \centering
        \includegraphics[width=\textwidth]{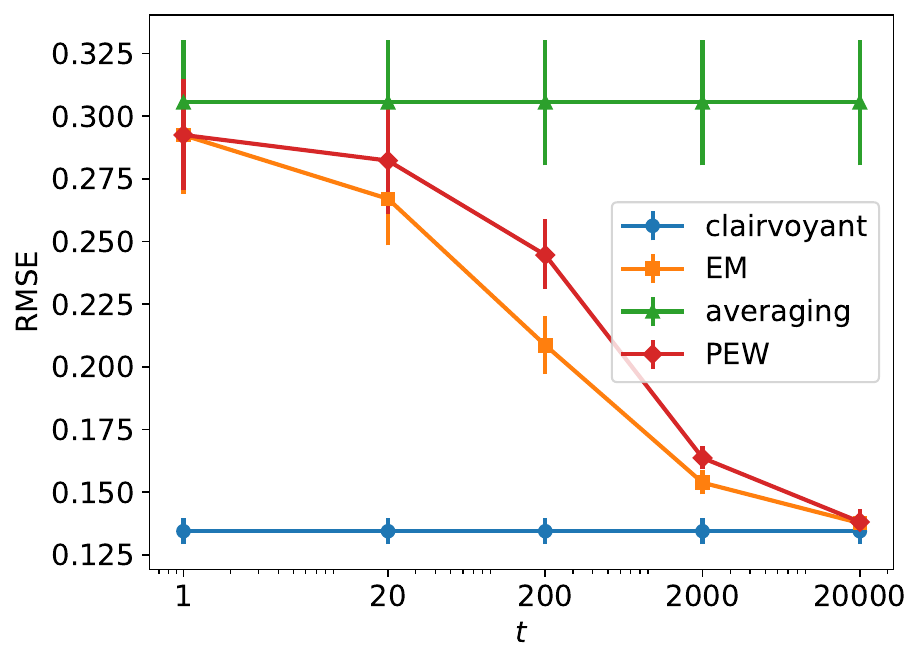}
        \subcaption{$K=20$}
        \label{fig:k_20_large}
    \end{minipage}

    \vspace{10pt}
    \centering
    \begin{minipage}{0.45\textwidth}
        \centering
        \includegraphics[width=\textwidth]{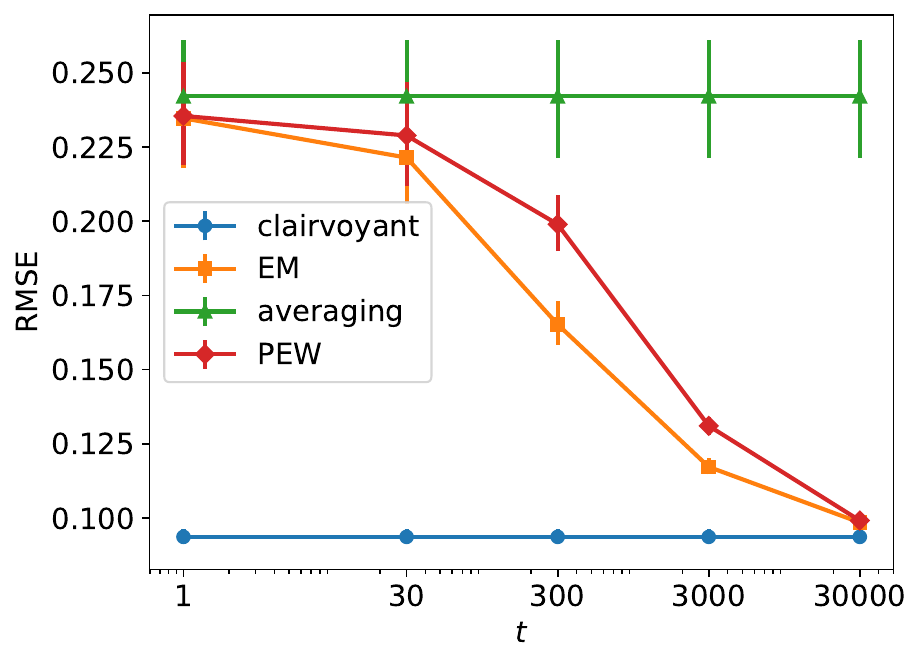}
        \subcaption{$K=30$}
        \label{fig:k_30_large}
    \end{minipage}

  \caption{Performance comparison of different policies for different values of $K$ and $t$.  The performance metric is mean squared error (MSE), but we plot root mean squared error (RMSE). We do this because RMSE has the same unit as the outcomes and estimates, while MSE does not.
  }
  \label{fig:perf_comp}
\end{figure}

\section{Literature Review}

Our literature review is structured around two main categories of prior work.  Initially, we examine research directly related to our problem setting, where true outcomes are not observed, and a center interacts with the same crowdworkers multiple times.  We aim to succinctly outline the range of algorithms developed for this setting and delineate how these differ from our proposed method, predict-each-worker. For a more comprehensive treatment of this problem setting, see the survey papers by \cite{zhang2016learning, sheng2019machine, zhang2022survey}.  Following this, we address research pertaining to different but related settings, such as those where true outcomes could be observed or interaction with crowdworkers is limited to a single round.  This part of the review highlights that our assumptions do not hold for all problem settings of interest and recognizes methods more suited to these alternative settings.  Such a comparison is useful, as it informs the reader about the boundaries of applicability of our method and aids in assessing its suitability for different application contexts.

\textbf{Our problem setting.} One of the most popular approaches for this problem setting is to posit a probabilistic model that captures crowdworker patterns and then approximate the maximum {\it a posteriori} (MAP) estimates of the parameters that characterize this probabilistic model.  The EM algorithm is one way to approximate these MAP estimates and has been applied to a variety of problems.    \citet{li2014confidence}, \citet{kara2015modeling}, and \citet{cai2020variational} use EM to learn a noise covariance matrix for the case where crowdworker estimates are continuous whereas \citet{dawid1979maximum} and \citet{zhang2016spectral} use EM to learn a so-called confusion matrix for the case where crowdworker estimates are categorical.  Other algorithms to approximate the MAP estimates have been considered in \citep{moreno2015bayesian, bach2017learning, li2019exploiting}. These algorithms estimate correlations when crowdworker estimates are categorical. Some other works assume that linear models sufficiently capture patterns between crowdworkers conditioned on relevant contexts like images or text \citep{raykar2010learning, rodrigues2018deep, li2021crowdsourcing, chen2021structured}.  The algorithms proposed in these works can be interpreted as estimating the MAP estimates of the parameters of crowdworker models conditioned on the context.  However, unlike our approach, all the algorithms discussed here are either difficult to extend or become computationally onerous if representing complex patterns requires flexible machine learning models like neural networks.

\citet{cheng2022many} consider fitting parameters of a stylized model that characterizes skills and dependencies among crowdworkers and then using the model to improve group estimates.  They demonstrate that resulting group estimates can turn out to be worse than averaging if the model is misspecified.  This casts doubt on the practicality of approaches that aim to improve on averaging.  Predict-each-worker overcomes this limitation by enabling the use of neural networks, which behave similarly to nonparametric models and thus do not suffer from misspecification to the extent that stylized models do.

\textbf{Other problem settings.}  Some prior works consider a problem setting where some or all the outcomes could be observed.  \cite{tang2011semi}, \cite{atarashi2018semi}, and \cite{gaunt2016training} develop so-called semi-supervised techniques to improve aggregation for machine learning problems.  Another line of work \citep{budescu2015identifying, merkle2017neglected, satopaa2021bias} focuses on generating improved forecasts of consequential future events.  By having the ability to observe some or all true outcomes, these works propose algorithms that make fewer restrictive assumptions about how crowdworkers relate.  Hence, for settings where ground truth information is available for a large number of crowdsourcing tasks, methods in these prior works could be more appropriate, but if ground truth information is sparse, then our approach predict-each-worker could be more helpful.

Now, we discuss a setting where the center only gets to interact with crowdworkers for one round. To gauge the skill level in this setting, prior works \citep{prelec2017solution, palley2019extracting, palley2023boosting} let each crowdworker not only guess the true quantity but also guess what the mean of the other crowdworkers' estimates will be. The intuition is that crowdworkers who can accurately predict how others will respond will tend to be more skilled.  In problem settings where each crowdsourcing task is very different from the others, using methods in the works above might be more beneficial, but if crowdsourcing tasks are similar and allow generalization, then our method could perform better.

While we treat the mechanism for seeking crowdworker estimates as given, there are several works that try to improve the method for collecting crowdworker estimates. We believe that the suggestions given in these works should be used when applying our method. There are three bodies of work that we discuss. First, works by \citet{da2020harnessing} and \citet{frey2021social} suggest that crowdworkers should not get access to estimates of others before they generate their own estimates. These works, using theoretical models and experiments, show that if this requirement is not met, errors made by crowdworkers can covary a lot and may not average out to zero, even with many crowdworkers.  As our work relies on the usefulness of averaging, it may be necessary to blind crowdworkers to each other's estimates.

Second, most crowdworkers get paid.  Accordingly, some studies examine how monetary incentives can raise the quality of crowdworkers’ estimates.  The work by \citet{mason2009financial} observed that increasing the pay did not increase the amount of quality but only increased the number of tasks that were done. Later, the work by \citet{ho2015incentivizing} pointed out that the observation made by \citet{mason2009financial} is usually true when the tasks are easy but would not hold if tasks are ``effort responsive."   Their experimental study verified this claim.  

Third, works \citep{faltings2014incentives, eickhoff2018cognitive, geiger2020garbage} discuss how cognitive biases can negatively affect the quality of crowdsourced data. To detect and mitigate such biases, the work by \citet{draws2021checklist} proposes a checklist that designers of crowdsourcing tasks should use.

\section{Concluding Remarks}
In this work, we proposed a novel approach to crowdsourcing that we call predict-each-worker.  We showed that it performs better than sample averaging, matches the performance of EM with a large dataset, and is asymptotically optimal for a Gaussian data-generating process. Additionally, we discussed how our algorithm can accommodate complex patterns between crowdworkers using neural networks. Taken together, our algorithm's performance and flexibility make it an attractive alternative to EM and other prior work.

Our work motivates many new directions for future work.  First, our aggregation rule is limited to cases where the estimates take continuous values.  We leave it for future work to develop simple aggregation rules that can be applied when the estimates are categorical, thereby making our approach directly applicable to many machine-learning problems like supervised learning \citep{zhang2016learning} and fine-tuning language models \citep{ouyang2022training}.  It would also be interesting to extend our approach to handle cases where estimates take complex forms, like in the single-question crowdsourcing problem \citep{prelec2017solution} or when they take forms like text or bounding boxes \citep{braylan2020modeling}.  Second, for simplicity and clarity, we restricted our focus to simulated data.  We leave it for future work to conduct a more thorough experimental evaluation using real-world datasets. 

\section*{Acknowledements}

This work was partly inspired by conversations with Morteza Ibrahimi and John Maggs. The paper also benefited from discussions with Saurabh Kumar and Wanqiao Xu. We gratefully acknowledge the Robert Bosch Stanford Graduate Fellowship, the NSF GRFP Fellowship, and financial support from the Army Research Office through grant W911NF2010055 and SCBX through the Stanford Institute for Human-Centered Artificial Intelligence.

\bibliography{references}

\appendix

\section{Proof of Theorem~\ref{thm:ssl_sufficiency}}
\label{app:proof_ssl_sufficiency}

To prove Theorem~\ref{thm:ssl_sufficiency}, we first state the Hammersley-Clifford Theorem.  It basically says that under minor technical conditions, ``leave-one-out" conditional distributions specify the joint distribution.
\begin{lemma}[Hammersley-Clifford Theorem]
\label{lem:full_cond_joint_dist}
Let $X \in \Re^d$ be a random variable.  Suppose its probability density function ${\bf p}_{X}$ has product-form support. Then, for all $\xi \in {\rm support}({\bf p}_{X})$ and $x \in {\rm support}({\bf p}_{X})$,
\begin{align*}
        {\bf p}_X(x) \propto \prod_{i=1}^d \frac{{\bf p}_{X_i | X_{-i}}(x_i | x_1,\cdots,x_{i-1},\xi_{i+1},\cdots,\xi_{d})}{{\bf p}_{X_i | X_{-i}}(\xi_i | x_1,\cdots,x_{i-1},\xi_{i+1},\cdots,\xi_{d})}
    \end{align*}
\end{lemma}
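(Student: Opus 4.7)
The plan is to prove the identity by constructing a telescoping product along a path of intermediate points between $\xi$ and $x$. Specifically, I will interpolate via the sequence of points $w^{(0)} = \xi$, $w^{(1)} = (x_1, \xi_2, \ldots, \xi_d)$, $w^{(2)} = (x_1, x_2, \xi_3, \ldots, \xi_d)$, and so on, up to $w^{(d)} = x$, so that consecutive points $w^{(i-1)}$ and $w^{(i)}$ differ only in the $i$th coordinate. The key observation is that the product-form support hypothesis guarantees each $w^{(i)}$ lies in $\mathrm{support}({\bf p}_X)$, since each of its coordinates lies in the support of the corresponding marginal; this is exactly what is needed to make every density and conditional density appearing below strictly positive and hence legitimately manipulable by division.

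The central computation is to rewrite each factor on the right-hand side using the definition of conditional density. For each $i$, both numerator and denominator share the same conditioning vector $(x_1, \ldots, x_{i-1}, \xi_{i+1}, \ldots, \xi_d)$, so the marginal density ${\bf p}_{X_{-i}}$ appearing in the denominator of Bayes' rule cancels when we take the ratio, yielding
\begin{equation*}
\frac{{\bf p}_{X_i \mid X_{-i}}(x_i \mid x_1,\ldots,x_{i-1},\xi_{i+1},\ldots,\xi_d)}{{\bf p}_{X_i \mid X_{-i}}(\xi_i \mid x_1,\ldots,x_{i-1},\xi_{i+1},\ldots,\xi_d)} = \frac{{\bf p}_X(w^{(i)})}{{\bf p}_X(w^{(i-1)})}.
\end{equation*}
Taking the product from $i=1$ to $d$, the right-hand side telescopes because the numerator ${\bf p}_X(w^{(i)})$ at index $i$ equals the denominator ${\bf p}_X(w^{(i)})$ at index $i+1$. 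All interior terms cancel, leaving ${\bf p}_X(x)/{\bf p}_X(\xi)$.

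Since $\xi$ is held fixed, ${\bf p}_X(\xi)$ is a constant that does not depend on $x$, so the identity becomes
\begin{equation*}
{\bf p}_X(x) = {\bf p}_X(\xi) \prod_{i=1}^d \frac{{\bf p}_{X_i \mid X_{-i}}(x_i \mid x_1,\ldots,x_{i-1},\xi_{i+1},\ldots,\xi_d)}{{\bf p}_{X_i \mid X_{-i}}(\xi_i \mid x_1,\ldots,x_{i-1},\xi_{i+1},\ldots,\xi_d)},
\end{equation*}
which is the claimed proportionality.

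The only real obstacle is ensuring that every conditional density in the telescoping product is well defined and strictly positive so that the cancellation argument is valid; this is precisely what the product-form support assumption delivers, by guaranteeing that each hybrid point $w^{(i)}$ sits in $\mathrm{support}({\bf p}_X)$ and hence that its $i$th coordinate lies in the support of the conditional density ${\bf p}_{X_i \mid X_{-i}}(\cdot \mid w^{(i)}_{-i})$. Without product-form support one could construct examples where an intermediate point $w^{(i)}$ falls outside $\mathrm{support}({\bf p}_X)$ even though $\xi$ and $x$ individually lie inside, breaking the telescoping path; noting this subtlety is the one point where care is required.
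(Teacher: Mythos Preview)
Your proof is correct and follows essentially the same approach as the paper's: both construct the same sequence of intermediate points $(x_1,\ldots,x_{i},\xi_{i+1},\ldots,\xi_d)$ and use Bayes' rule to show that each ratio of full conditionals equals the corresponding ratio of joint densities, so that the product telescopes to ${\bf p}_X(x)/{\bf p}_X(\xi)$. The only cosmetic difference is that the paper peels off coordinates starting from $x$ and working down to $\xi$, whereas you name the intermediate points $w^{(i)}$ explicitly and traverse them from $\xi$ up to $x$; your explicit remark about why product-form support keeps every $w^{(i)}$ in the support is a welcome addition.
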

The proof is given below. It is adapted from \cite{cosma2010markov}.
\begin{proof}
    \begin{align*}
        {\bf p}_X(x_1, \cdots, x_d) \overset{(a)}{=} &{\bf p}_{X_{-d}}(x_{1},\cdots,x_{d-1}) {\bf p}_{X_d | X_{-d}}(x_d | x_{-d}) \\
        \overset{(b)}{=}  & \frac{{\bf p}_X(x_1, \cdots, x_{d-1}, \xi_d)}{{\bf p}_{X_d | X_{-d}}(\xi_d | x_{-d})}{\bf p}_{X_d | X_{-d}}(x_d | x_{-d}) \\
        = & {\bf p}_X(x_1, \cdots, x_{d-1}, \xi_d)\frac{{\bf p}_{X_d | X_{-d}}(x_d | x_{-d})}{{\bf p}_{X_d | X_{-d}}(\xi_d | x_{-d})} \\
        = &\cdots \\
        = &{\bf p}_X(\xi_1, \cdots, \xi_d) \prod_{i=1}^d \frac{{\bf p}_{X_i | X_{-i}}(x_i | x_1,\cdots,x_{i-1},\xi_{i+1},\cdots,\xi_{d})}{{\bf p}_{X_i | X_{-i}}(\xi_i | x_1,\cdots,x_{i-1},\xi_{i+1},\cdots,\xi_{d})}.
    \end{align*}
    Here, $(a)$ and $(b)$ follow from Bayes rule.  The product-form support ensures that for all $i$, ${\bf p}_{X_i|X_{-i}}(x_i|x_{1:i-1},\xi_{i+1:d})$ and ${\bf p}_{X_i | X_{-i}}(\xi_i | x_{1:i-1},\xi_{i+1:d})$ are both well-defined and non-zero.
\end{proof}

Now, we will prove Theorem~\ref{thm:ssl_sufficiency}. We restate the theorem statement for clarity.
\SSLSufficiency*
\begin{proof}
As $\Pr(Y_{t} \in \cdot | \theta)$ is absolutely continuous, and the corresponding density has product-form support, Lemma~\ref{lem:full_cond_joint_dist} implies that $\{P_*^{(k)}\}_{k=1}^K$ determine $\Pr(Y_{t} \in \cdot | \theta)$. Moreover, as $\theta$ is minimal, in the sense that, $\Pr(Y_{t} \in \cdot | \theta)$ determines $\theta$, we have that $\{P_*^{(k)}\}_{k=1}^K$ determines $\theta$.

\end{proof}

\section{Proof of Theorem~\ref{thm:opt_wts_opt_ssl_params}}
\label{app:opt_wts_opt_ssl_params}

The proof is based on the following lemma, a well-known result in the literature for covariance matrix estimation \citep{yuan2010high, kwan2014regression}. It shows how a covariance matrix can be expressed in terms of coefficients and errors of certain linear regression problems.
\begin{restatable}{lemma}{covLinearReg}
   \label{lem:cov_linear_reg}
    Consider a covariance matrix $S \in \mathcal{S}^K_{++}$. Suppose $A \sim \normal(0, S)$. For each $k \in \{1,\cdots,K \}$, let
    \begin{align*}
        u^{(k)} &\in \argmin_{u \in \Re^{K-1}} \E[(A_k - u^\top A_{-k})^2] \\
        \ell^{(k)} &= \min_{u \in \Re^{K-1}} \E[(A_k - u^\top A_{-k})^2]
    \end{align*}
    Then,
    \begin{align*}
        S^{-1}_{k,k'} = 
        \begin{cases}
            \frac{1}{\ell^{(k)}} & \text{if } k=k' \\
            -\frac{u^{(k,k')}}{\ell^{(k)}} & \text{if } k \neq k'. 
        \end{cases}
    \end{align*} 
\end{restatable}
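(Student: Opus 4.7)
The plan is to combine two standard ingredients: the closed-form solution to the least-squares problem and the Schur complement formula for block matrix inversion. Fix $k \in \{1,\ldots,K\}$. Since the quadratic objective $\E[(A_k - u^\top A_{-k})^2]$ is strictly convex in $u$ (because $S_{-k,-k} \succ 0$, a consequence of $S \succ 0$), the first-order conditions give the unique minimizer
\begin{align*}
u^{(k)} = S_{-k,-k}^{-1} S_{-k,k},
\qquad
\ell^{(k)} = S_{k,k} - S_{k,-k} S_{-k,-k}^{-1} S_{-k,k}.
\end{align*}
Equivalently (and this is how the expression reappears below), $\ell^{(k)}$ is exactly the Schur complement of the block $S_{-k,-k}$ in $S$. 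Note we never actually use Gaussianity of $A$ here; only the covariance $S$ and positive definiteness matter for this step.

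Next, I would permute coordinates so that the $k$-th index is moved to the last position, so that
\begin{align*}
S = \begin{pmatrix} S_{-k,-k} & S_{-k,k} \\ S_{k,-k} & S_{k,k} \end{pmatrix}.
\end{align*}
Applying the standard block-inversion identity with Schur complement $\ell^{(k)} = S_{k,k} - S_{k,-k} S_{-k,-k}^{-1} S_{-k,k}$, the bottom-right scalar entry of $S^{-1}$ equals $1/\ell^{(k)}$, and the off-diagonal column block equals $-S_{-k,-k}^{-1} S_{-k,k}/\ell^{(k)} = -u^{(k)}/\ell^{(k)}$. Undoing the permutation, this gives $(S^{-1})_{k,k} = 1/\ell^{(k)}$ and $(S^{-1})_{k',k} = -u^{(k,k')}/\ell^{(k)}$ for $k' \neq k$.

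Finally, since $S^{-1}$ is symmetric, $(S^{-1})_{k,k'} = (S^{-1})_{k',k}$, yielding the claimed formula for both off-diagonal indices. I don't anticipate any real obstacle: the lemma is essentially a repackaging of the Schur complement identity, and the only care needed is to verify that $S_{-k,-k} \succ 0$ (which follows from $S \succ 0$ by restricting the quadratic form to vectors supported off index $k$), so that $u^{(k)}$ and the Schur complement are well defined and the first-order conditions yield the unique optimum.
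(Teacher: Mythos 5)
Your proposal is correct and follows essentially the same route as the paper: express $u^{(k)}$ and $\ell^{(k)}$ in terms of the blocks $S_{-k,-k}^{-1}S_{-k,k}$ and the Schur complement $S_{k,k}-S_{k,-k}S_{-k,-k}^{-1}S_{-k,k}$, then read off the relevant row/column of $S^{-1}$ from the block-inversion identity and handle general $k$ by a permutation of coordinates. The only cosmetic differences are that you place index $k$ last rather than first and invoke symmetry of $S^{-1}$ explicitly, neither of which changes the substance.
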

\noindent We defer the proof to Appendix~\ref{app:cov_linear_reg_proof}.

Now, we prove Theorem~\ref{thm:opt_wts_opt_ssl_params}. We restate the statement for clarity.
\optWtsOptSSLParams*
\begin{proof}
    As for any $t$, $Y_t | S_* \sim \normal(0, S_*)$, we have
    \begin{align*}
        \indiWtOpt &\in \argmin_{u \in \Re^{K-1}} \E[(Y_{t,k} - u^\top Y_{t,-k})^2 | S_*]; \\
        \indiLossOpt &= \min_{u \in \Re^{K-1}} \E[(Y_{t,k} - u^\top Y_{t,-k})^2 | S_*].
    \end{align*}
    Next, by Lemma~\ref{lem:cov_linear_reg},
    \begin{align*}
        S^{-1}_{*,k,k'} = 
        \begin{cases}
            \frac{1}{\indiLossOpt} & \text{if } k=k' \\
            -\frac{u_*^{(k,k')}}{\indiLossOpt} & \text{if } k \neq k'. 
        \end{cases}
    \end{align*}
    Second, recall that $\nu_* = \overline{v} S_*^{-1} \1$. By substituting $S^{-1}_*$ from above in $\overline{v} S_*^{-1} \1$, we get the result.
\end{proof}

\subsection{Proof of Lemma~\ref{lem:cov_linear_reg}}
\label{app:cov_linear_reg_proof}
To prove Lemma~\ref{lem:cov_linear_reg}, we state another lemma that allows us to write the inverse of a block matrix in terms of its blocks.
\begin{lemma}
\label{lem:block_inv}
Let $\mathfrak{M}$ be an invertible matrix. Suppose it has the following block form:
\begin{align*}
    \mathfrak{M} = \begin{pmatrix}
                    A & B \\
                    C & D
                    \end{pmatrix}.
\end{align*}
If $D$ is invertible, then
\[
\mathfrak{M}^{-1}
=
\begin{pmatrix}
(A - BD^{-1}C)^{-1} & -(A - BD^{-1}C)^{-1}BD^{-1} \\
-D^{-1}C(A - BD^{-1}C)^{-1} & D^{-1} + D^{-1}C(A - BD^{-1}C)^{-1}BD^{-1}
\end{pmatrix}
\]
\end{lemma}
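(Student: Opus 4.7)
The plan is to prove the block inversion formula via a block LDU factorization, which makes both the formula itself and the invertibility hypothesis on the Schur complement transparent. First, I would write down the decomposition
\begin{equation*}
\mathfrak{M} = \begin{pmatrix} I & BD^{-1} \\ 0 & I \end{pmatrix} \begin{pmatrix} S & 0 \\ 0 & D \end{pmatrix} \begin{pmatrix} I & 0 \\ D^{-1}C & I \end{pmatrix},
\end{equation*}
where $S := A - BD^{-1}C$ is the Schur complement of $D$ in $\mathfrak{M}$, and verify it by direct block multiplication (using only that $D^{-1}$ exists). Each of the outer unipotent triangular factors has determinant $1$, so taking determinants yields $\det(\mathfrak{M}) = \det(S)\det(D)$. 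Since $\mathfrak{M}$ and $D$ are both invertible by hypothesis, this forces $S$ to be invertible as well, which is what makes the right-hand side of the claimed formula well-defined.

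Next, I would invert the factorization. The two unipotent block-triangular factors invert by simply negating the off-diagonal block, and the block-diagonal middle factor inverts blockwise to $\mathrm{diag}(S^{-1}, D^{-1})$. Multiplying the three inverses in the reverse order gives
\begin{equation*}
\mathfrak{M}^{-1} = \begin{pmatrix} I & 0 \\ -D^{-1}C & I \end{pmatrix} \begin{pmatrix} S^{-1} & 0 \\ 0 & D^{-1} \end{pmatrix} \begin{pmatrix} I & -BD^{-1} \\ 0 & I \end{pmatrix}.
\end{equation*}
A final routine block multiplication produces exactly the four-block expression stated in the lemma, with the $(2,2)$ block acquiring the characteristic $D^{-1} + D^{-1}C S^{-1} B D^{-1}$ correction from the two outer triangular factors acting on $S^{-1}$.

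The main obstacle is purely bookkeeping; there is no conceptually hard step. The only subtlety worth flagging is that the lemma's statement tacitly requires $S$ to be invertible for the formula to make sense, and this must be secured from the two hypotheses (invertibility of $\mathfrak{M}$ and of $D$) rather than assumed directly. The determinant identity above accomplishes exactly this. As an alternative, one could skip the LDU derivation and simply verify the formula by computing $\mathfrak{M}\,\mathfrak{M}^{-1} = I$ blockwise, but the factorization route has the advantage of also exhibiting why the Schur complement must be invertible.
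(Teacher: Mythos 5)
Your proof is correct, and it follows essentially the same route as the source the paper defers to: the paper gives no proof of its own but cites Gallier's notes on Schur complements, which derive the formula via exactly this block LDU factorization and inversion of the unipotent triangular factors. Your additional observation that the determinant identity $\det(\mathfrak{M}) = \det(A - BD^{-1}C)\det(D)$ secures invertibility of the Schur complement (which the lemma statement leaves tacit) is a worthwhile point that the paper glosses over.
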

Please refer to pages 1 and 2 of \citep{gallier2010notes} on Schur complements for a proof of this lemma.

Now, we prove Lemma~\ref{lem:cov_linear_reg}. We restate it for clarity.
\covLinearReg*

\begin{proof}

To prove this Lemma, we begin by writing $S$ as follows.
\begin{align*}
    S = 
    \begin{bmatrix}
        S_{1,1} & S_{-1,1} \\
        S_{1,-1} & S_{-1,-1}
    \end{bmatrix}.
\end{align*}

As $S$ is positive definite, it is easy to show that $u^{(1)}$ is unique. This unique value and the corresponding error $\ell^{(1)}$ can be expressed in terms of elements of $S$. Specifically, we have
\begin{align*}
    u^{(1)} &= S_{-1,-1}^{-1}S_{-1,1}; \\
    \ell^{(1)} &= S_{1,1} - S_{1,-1}S_{-1,-1}^{-1}S_{-1,1}.
\end{align*}

Then, we apply Lemma~\ref{lem:block_inv} on matrix $S_*$ to get that the first row of $S^{-1}$ is the following.
\begin{align*}
    S^{-1}_{1,:} = \left[\frac{1}{\ell^{(1)}}, ~- \frac{(u^{(1)})^\top}{\ell^{(1)}} \right]
\end{align*}
In other words, for $k=1$, 
\begin{align*}
    S^{-1}_{k,k'} = 
    \begin{cases}
        \frac{1}{\ell^{(k)}} & \text{if } k=k' \\
        -\frac{u^{(k,k')}}{\ell^{(k)}} & \text{if } k \neq k'. 
    \end{cases}.
\end{align*} 
With some simple algebra, it is straightforward to extend this argument for other values of $k$.

\end{proof}

\optWtsOptSSLParams*

\section{Proof of Theorem~\ref{thm:consistency_thm}}
\label{app:consistency_thm}

To prove Theorem~\ref{thm:consistency_thm}, we use two results. First, Theorem~\ref{thm:opt_wts_opt_ssl_params}, a theorem that offers a characterization of $\nu_*$ in terms of clairvoyant SSL parameters.  Second, a result that shows that Bayesian linear regression is consistent for our problem. 

The following lemma formalizes our claim that Bayesian linear regression converges for our problem.
\begin{lemma}
\label{lem:blr_converges}
If $\Lambda \succ 0$, $\indiLossPrior>0$, and $\regVar \geq 0$, then for each $k \in \{1,\cdots,K \}$, $\lim_{t \to \infty} \indiWt \overset{{\rm a.s.}}{=} \indiWtOpt$ and $\lim_{t \to \infty} \indiLoss \overset{{\rm a.s.}}{=} \indiLossOpt$.
\end{lemma}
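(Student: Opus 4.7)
The plan is to derive closed-form expressions for the MAP estimates $(\indiWt,\indiLoss)$ from the first-order optimality conditions of $f^{(k)}_{Y_{1:t-1},\regWt,\regVar,\indiWtPrior,\indiLossPrior}$, and then invoke the strong law of large numbers (SLLN) conditional on the latent $\theta$ to identify the almost-sure limits with $(\indiWtOpt,\indiLossOpt)$. Because the likelihood is Gaussian in $u$ for fixed $\ell$ and the prior on $u$ is Gaussian conditional on $\ell$, the $u$-optimality condition decouples from $\ell$ and yields the ridge-type closed form
\[
\indiWt = \bigl(X_{t-1}^{\top} X_{t-1} + \regWt\bigr)^{-1}\bigl(X_{t-1}^{\top} y_{t-1} + \regWt\,\indiWtPrior\,\1\bigr),
\]
where $X_{t-1}\in\R^{(t-1)\times(K-1)}$ has rows $Y_{\tau,-k}^{\top}$ and $y_{t-1}\in\R^{t-1}$ collects the responses $Y_{\tau,k}$ for $\tau<t$. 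The $\ell$-optimality condition in turn expresses $\indiLoss$ as a penalized residual sum of squares divided by a denominator that grows linearly in $t$, with only $O(1)$ prior contributions to the numerator and denominator.

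The next step is to condition on $\theta$. Under this conditioning $(Y_\tau)_{\tau<t}$ is i.i.d.\ Gaussian with positive-definite covariance $S_* = \V[Y_t \mid \theta]$, so the SLLN delivers $\frac{1}{t-1}X_{t-1}^{\top} X_{t-1} \overset{{\rm a.s.}}{\longrightarrow} S_{*,-k,-k}$ and $\frac{1}{t-1}X_{t-1}^{\top} y_{t-1} \overset{{\rm a.s.}}{\longrightarrow} S_{*,-k,k}$. Dividing numerator and denominator of the expression for $\indiWt$ by $t-1$, the contributions $\regWt/(t-1)$ and $\regWt\,\indiWtPrior\,\1/(t-1)$ both vanish. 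Because $S_{*,-k,-k}$ is a principal submatrix of the positive-definite $S_*$ it is invertible, and continuity of matrix inversion on the positive-definite cone gives $\indiWt\overset{{\rm a.s.}}{\longrightarrow}S_{*,-k,-k}^{-1}S_{*,-k,k}$. This limit coincides with $\indiWtOpt$ by the normal-equations characterization of the clairvoyant regression coefficients in \eqref{eq:linear-SSL-model}.

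For $\indiLoss$, I would decompose the empirical residual sum of squares as
\begin{align*}
\frac{\|y_{t-1}-X_{t-1}\indiWt\|^2}{t-1}
&= \frac{\|y_{t-1}-X_{t-1}\indiWtOpt\|^2}{t-1}
- 2(\indiWt-\indiWtOpt)^{\top}\frac{X_{t-1}^{\top}(y_{t-1}-X_{t-1}\indiWtOpt)}{t-1} \\
&\quad + (\indiWt-\indiWtOpt)^{\top}\frac{X_{t-1}^{\top} X_{t-1}}{t-1}(\indiWt-\indiWtOpt).
\end{align*}
By SLLN the first summand converges a.s.\ to $\E[(Y_{t,k}-\indiWtOpt{}^{\top}Y_{t,-k})^2\mid\theta]=\indiLossOpt$; the empirical cross-moment $X_{t-1}^{\top}(y_{t-1}-X_{t-1}\indiWtOpt)/(t-1)$ converges a.s.\ to $\E[Y_{t,-k}(Y_{t,k}-\indiWtOpt{}^{\top}Y_{t,-k})\mid\theta]=0$ by orthogonality of residuals and regressors, so combined with $\indiWt\to\indiWtOpt$ a.s.\ the cross-term and quadratic correction both vanish. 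Since the prior contributes $O(1)$ to both numerator and denominator of $\indiLoss$ while the denominator grows linearly in $t$, $\indiLoss\overset{{\rm a.s.}}{\longrightarrow}\indiLossOpt$. All the above limits hold on a full $\Pr(\cdot\mid\theta)$-measure event for $\Pr_\theta$-almost every $\theta$, so they hold $\Pr$-almost surely.

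The step I anticipate will be the most delicate is the $\indiLoss$ decomposition, because it couples the random $t$-dependent estimator $\indiWt$ with a random empirical process. The resolution requires coordination of a.s.\ events: on the intersection of the full-measure events on which $\indiWt\to\indiWtOpt$ and on which the empirical moments converge to their conditional expectations, each of the second and third summands above is the product of a null sequence with a bounded sequence and therefore vanishes. Beyond this, the proof is routine manipulation of a convex Gaussian loss with conjugate priors.
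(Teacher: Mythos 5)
Your proposal is correct and follows essentially the same route as the paper's proof: derive the closed-form MAP estimates from the first-order conditions (the ridge-type formula for $\indiWt$ and the penalized residual sum of squares over a linearly growing denominator for $\indiLoss$), then apply the strong law of large numbers conditional on $\theta$ and identify the limits with $\indiWtOpt = S_{*,-k,-k}^{-1}S_{*,-k,k}$ and $\indiLossOpt = S_{*,k,k} - S_{*,k,-k}S_{*,-k,-k}^{-1}S_{*,-k,k}$. The only cosmetic difference is that you expand the residual sum of squares around $\indiWtOpt$ and invoke orthogonality of residuals and regressors, whereas the paper expands it directly into three empirical second moments and passes to the limit in each; both reduce to the same computation.
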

The proof is deferred to Appendix~\ref{app:blr_converges_proof}.  We assume $\Lambda \succ 0$ and $\indiLossPrior>0$ to make the proof simpler, but it is possible to prove a similar result by only assuming $\Lambda \succeq 0$ and $\indiLossPrior \geq 0$.

Now, we will prove Theorem~\ref{thm:consistency_thm}. For clarity, we restate this theorem.
\ConsistencyThm*
\begin{proof}

As $\E[Z_t | Y_{1:\infty}] = \nu_*^\top Y_t$, to prove the theorem, it suffices to prove that $\lim_{t \to \infty} \hat{\nu}_{t,k} = \nu_*$ almost surely. 

Before we prove this, we show that $\lim_{t \to \infty} \Tilde{\nu}_{t,k}$ almost surely.
\begin{align*}
    \lim_{t \to \infty} \Tilde{\nu}_{t,k} \overset{(a)}{=} &\lim_{t \to \infty} \overline{v} \frac{1 - \1^\top \indiWt}{\indiLoss} \\
    \overset{(b)}{=} &\overline{v}\frac{1 - \1^\top \indiWtOpt}{\indiLossOpt} \quad {\rm a.s.} \\
  \overset{(c)}{=} & \nu_{*,k}. \quad {\rm a.s.}
\end{align*}
$(a)$ uses the definition of $\Tilde{\nu}_{t,k}$. $(b)$ is true because of Lemma~\ref{lem:blr_converges}. $(c)$ is true because of Theorem~\ref{thm:opt_wts_opt_ssl_params}.

Now, we show $\lim_{t \to \infty} \hat{\nu}_{t,k} = \nu_*$.
\begin{align*}
  \lim_{t \to \infty} \hat{\nu}_{t,k} \overset{(a)}{=} &\lim_{t \to \infty} \left(\frac{b}{b+t-1} \Tilde{\nu}_{1,k} + \left(1- \frac{b}{b+t-1} \right) \Tilde{\nu}_{t,k} \right) \\
  \overset{(b)}{=} &\lim_{t \to \infty} \frac{b}{b+t-1} \Tilde{\nu}_{1,k} + \lim_{t \to \infty} \left(1- \frac{b}{b+t-1} \right) \Tilde{\nu}_{t,k}  \quad {\rm a.s.} \\
  \overset{(c)}{=} & 0 + \lim_{t \to \infty} \Tilde{\nu}_{t,k}  \quad {\rm a.s.} \\
  \overset{(d)}{=} & \nu_{*,k}.   \quad {\rm a.s.}
\end{align*}
$(a)$ is true using the definition of $\hat{\nu}_{t,k}$. $(b)$ is true because the first limit exists surely and because the second limit exists almost surely. $(c)$ is true $\lim_{t \to \infty}\left(1- \frac{b}{b+t-1} \right)=1$ and because $\lim_{t \to \infty} \Tilde{\nu}_{t,k}$ exists almost surely. $(d)$ uses consistency of $\Tilde{\nu}_{t,k}$.

\end{proof}

\subsection{Proof of Lemma~\ref{lem:blr_converges}}
\label{app:blr_converges_proof}

\begin{proof}

Recall that $(\indiWt, \indiLoss)$ is a minimizer of the following function, where the minimization happens over values $(u,\ell) \in \Re^{K-1} \times \Re_{+}$.
\begin{align*}
    f^{(k)}_{Y_{1:t-1},\regWt,\regVar,\indiWtPrior,\indiLossPrior}(u, \ell) 
    = &\underbrace{\frac{1}{2\ell} \left(u - \indiWtPrior \1  \right)^\top \Lambda \left(u - \indiWtPrior \1 \right) + \log \left( \left|l \Lambda^{-1} \right| \right)}_{\textrm{Normal prior}, ~\normal(\indiWtPrior \1, \ell \Lambda^{-1})} + \underbrace{\left(\frac{\regVar}{2} + 1\right)\log \ell + \frac{(\regVar + K + 1) \indiLossPrior }{2 \ell}}_{\textrm{Inverse-Gamma prior}} \\
    &\underbrace{+ \sum_{\tau < t} \frac{(Y_{\tau, k} - u^\top Y_{\tau, -k})^2}{2\ell} + \frac{(t-1)}{2} \log(\ell)}_{{\rm log\ likelihood}} + \mathfrak{c},
\end{align*}
where $\mathfrak{c}$ is an additive constant that does not depend on $u$ and $\ell$.

We claim that the following pair is the unique minimizer of $f^{(k)}_{Y_{1:t-1},\regWt,\regVar,\indiWtPrior,\indiLossPrior}$ for all $t$ if $\Lambda \succ 0$ and $\indiLossPrior>0$.
\begin{align*}
    \indiWt = &\left( \Lambda + \sum_{\tau<t} Y_{\tau,-k}Y_{\tau,-k}^\top \right)^{-1} \left( \Lambda \1 \overline{u} + \sum_{\tau<t} Y_{\tau,k} Y_{\tau,-k}\right) \\
    \indiLoss = &\frac{\regVar + K + 1}{\regVar + K + t} \overline{\ell}^{(k)} + \frac{1}{\regVar + K + t} (\indiWt - \overline{u} \1 )^\top \Lambda (\indiWt - \overline{u} \1) \\
    &+ \frac{\sum_{\tau<t} \big(Y_{\tau,k} - (\indiWt)^\top Y_{\tau,-k}\big)^2}{\regVar + K + t}
\end{align*}
To establish this uniqueness, we find the Hessian matrix for $f^{(k)}_{Y_{1:t-1},\regWt,\regVar,\indiWtPrior,\indiLossPrior}$ at $(\indiWt, \indiLoss)$ and show that it is positive definite for all $t$. This matrix is equal to 
\begin{align*}
    \begin{bmatrix}
        \frac{2}{\indiLoss} \left( \Lambda + \sum_{\tau < t} Y_{\tau,-k}Y_{\tau,-k}^\top \right) & {\bf 0}^\top \\
        {\bf 0} & \frac{\regVar + K + t }{\left(\indiLoss\right)^2}
    \end{bmatrix}.
\end{align*}
First, this matrix is well-defined because $\indiLoss>0$ for all $t$ (consequence of $\indiLossPrior$ being positive). Second, this matrix is positive-definite because $\Lambda~+~\sum_{\tau < t}Y_{\tau,-k}Y_{\tau,-k}^\top~\succ~0$ and $\regVar + K + t > 0$.

Now we will show that for each $k$, $\indiWt \overset{a.s.}{\to} \indiWtOpt$.
\begin{align*}
    \lim_{t \to \infty} \indiWt = &\lim_{t \to \infty} \left( \Lambda + \sum_{\tau<t} Y_{\tau,-k}Y_{\tau,-k}^\top \right)^{-1} \left( \Lambda \overline{u}^{(k)} + \sum_{\tau<t} Y_{\tau,k} Y_{\tau,-k}\right) \\
    = &\lim_{t \to \infty} \left( \frac{\Lambda}{t-1}+ \frac{\sum_{\tau<t} Y_{\tau,-k}Y_{\tau,-k}^\top}{t-1} \right)^{-1} \left( \frac{\Lambda \overline{u}^{(k)} }{t-1}+ \frac{\sum_{\tau<t} Y_{\tau,k} Y_{\tau,-k}}{t-1}\right) \\
    \overset{(a)}{=} &  \lim_{t \to \infty}\left( \frac{\Lambda}{t-1}+ \frac{\sum_{\tau<t} Y_{\tau,-k}Y_{\tau,-k}^\top}{t-1} \right)^{-1} \lim_{t \to \infty} \left( \frac{\Lambda \overline{u}^{(k)} }{t-1}+ \frac{\sum_{\tau<t} Y_{\tau,k} Y_{\tau,-k}}{t-1}\right) \quad {\rm a.s.} \\
    \overset{(b)}{=} &  \lim_{t \to \infty}\left( \frac{\Lambda}{t-1}+ \frac{\sum_{\tau<t} Y_{\tau,-k}Y_{\tau,-k}^\top}{t-1} \right)^{-1} S_{*,k,-k} \quad {\rm a.s.} \\
    \overset{(c)}{=} &  S_{*,-k,-k}^{-1}  S_{*,k,-k} \quad {\rm a.s.} \\
    = &  \indiWtOpt \quad {\rm a.s.}
\end{align*}
$(a)$ is a valid step because both limits exist almost surely.  $(b)$ is true because $\frac{\Lambda \overline{u}^{(k)} }{t-1} \to 0$ and law of large numbers implies that $\frac{\sum_{\tau<t} Y_{\tau,k} Y_{\tau,-k}}{t-1} \to S_{*,k,-k}$ almost surely.  $(c)$ is true because $\frac{\Lambda }{t-1} \to 0$ and law of large numbers implies that $\frac{\sum_{\tau<t} Y_{\tau,-k} Y_{\tau,-k}}{t-1} \to S_{*,-k,-k}$ almost surely.

Now we will show that for each $k$, $\indiLoss \overset{a.s.}{\to} \indiLossOpt$.
\begin{align*}
    \lim_{t \to \infty}\indiLoss = &\lim_{t \to \infty} \bigg(\frac{\regVar + K + 1}{\regVar + K + t} \overline{\ell}^{(k)} + \frac{1}{\regVar + K + t} (\indiWt - \overline{u}^{(k)})^\top \Lambda (\indiWt - \overline{u}^{(k)}) \\
    &+ \frac{\sum_{\tau<t} \big(Y_{\tau,k} - (\indiWt)^\top Y_{\tau,-k}\big)^2}{\regVar + K + t}\bigg) \\
    \overset{(a)}{=}  &\lim_{t \to \infty} \frac{\regVar + K + 1}{\regVar + K + t} \overline{\ell}^{(k)} + \lim_{t \to \infty} \frac{1}{\regVar + K + t} (\indiWt - \overline{u}^{(k)})^\top \Lambda (\indiWt - \overline{u}^{(k)}) \\
    &+ \lim_{t \to \infty} \frac{\sum_{\tau<t} \big(Y_{\tau,k} - (\indiWt)^\top Y_{\tau,-k}\big)^2}{\regVar + K + t} \quad {\rm a.s.}\\
    =  &0 + 0 + \lim_{t \to \infty} \frac{\sum_{\tau<t} \big(Y_{\tau,k} - (\indiWt)^\top Y_{\tau,-k}\big)^2}{\regVar + K + t} \quad {\rm a.s.}\\
    \overset{(b)}{=} &\lim_{t \to \infty} \frac{t-1}{\regVar + K + t} \times \bigg( \lim_{t \to \infty} \frac{\sum_{\tau<t}Y_{\tau,k}^2}{t-1} - 2 \lim_{t \to \infty} (\indiWt)^\top \times\lim_{t \to \infty} \frac{\sum_{\tau<t} Y_{\tau,k}Y_{\tau,-k} }{t-1} \\
    & + \lim_{t \to \infty} (\indiWt)^\top \times\lim_{t \to \infty} \frac{\sum_{\tau<t} Y_{\tau,-k}Y_{\tau,-k} }{t-1} \times \lim_{t \to \infty} (\indiWt) \bigg) \\
    \overset{(c)}{=} & S_{*,k,k} - 2 \left(\indiWtOpt\right)^\top S_{*,k,-k}+ \left(\indiWtOpt\right)^\top S_{*,-k,-k}\indiWtOpt \\
    \overset{(d)}{=} &S_{*,k,k} - S_{*,k,-k} S_{*,-k,-k} S_{*,-k,k} \\
    \overset{(e)}{=} &\indiLossOpt.
\end{align*}
$(a)$ is valid because each limit exists surely or almost surely.  $(b)$ is valid because each limit exists surely.  $(c)$ is true because $\lim_{t \to \infty} \frac{t-1}{\regVar + K + t} = 1$, $\indiWt \overset{a.s.}{\to} \indiWtOpt$, and $\frac{\sum_{\tau < t} Y_{\tau}Y_{\tau}^\top}{t-1} \overset{a.s.}{\to} S_*$. $(d)$ and $(e)$ follow from definitions of $\indiWtOpt$ and $\indiLossOpt$.

\end{proof}

\section{Simulation Study Details}

\subsection{Data-Generating Process Satisfies Problem-Formulation Assumptions}
\label{app:data_gen_prob_form_asmp}

Recall that we generate the data in the following manner.  For each crowdworker $k \in \mathbb{Z}_{++}$, 
\begin{align*}
\Tilde{Y}_{t,k} = \tilde{Z}_t + \sum_{n=1}^N C_{k,n} X_{t,n}.
\end{align*}
Here, $\tilde{Z}_t \sim \normal(0,1)$, $(X_t: t \in \mathbb{Z}_{++})$ is an iid sequence of standard Gaussian vectors that is independent of $\tilde{Z}_t \sim \normal(0,1)$, and $C_{k,n}$ satisfies the following. $C_{k,n} \sim \normal(0,n^{-q})$, $C_{k,n}$ is sampled independently from $\tilde{Z}_t$, the factor $X_t$, and any other coefficient $C_{k',n'}$, where $(k', n')\neq (k,n)$.

Let us verify if this data-generating process satisfies the assumptions that we made in the problem formulation. First, we will check that for any value of $t$, the elements of $(\Tilde{Y}_{t,k}:k=1,2,\cdots)$ are exchangeable.  As the elements of $(\Tilde{Y}_{t,k}:k=1,2,\cdots)$ are iid conditioned on $\Tilde{Z}_t$ and $X_t$, by de Finetti's theorem, they are exchangeable.  Second, we will check that $(\Tilde{Y}_{t}:t=1,2,\cdots)$ are exchangeable. This is true because conditioned on coefficients $C$, $(\Tilde{Y}_{t}:t=1,2,\cdots)$ are iid.  Next, we will check that $\sum_{k=1}^K \Tilde{Y}_{t,k}/K$ converges almost surely.  
\begin{align*}
    \lim_{K \to \infty} \sum_{k=1}^K \frac{\Tilde{Y}_{t,k}}{K} \overset{(a)}{=} &\lim_{K \to \infty} \frac{1}{K} \left( \tilde{Z}_t + \sum_{k=1}^K \sum_{n=1}^N C_{k,n} X_{t,n} \right) \\
    \overset{(b)}{=} & \tilde{Z}_t + \lim_{K \to \infty} \frac{1}{K}\sum_{k=1}^K \sum_{n=1}^N C_{k,n} X_{t,n} \quad {\rm a.s.} \\
    = & \tilde{Z}_t + \lim_{K \to \infty} \sum_{n=1}^N X_{t,n} \frac{1}{K}\sum_{k=1}^K  C_{k,n} \quad {\rm a.s.} \\
    \overset{(c)}{=} & \tilde{Z}_t + \sum_{n=1}^N X_{t,n} \lim_{K \to \infty} \frac{1}{K}\sum_{k=1}^K  C_{k,n} \quad {\rm a.s.} \\
    \overset{(d)}{=} & \tilde{Z}_t.
\end{align*}
Here, $(a)$ follows from the definition of $\Tilde{Y}_{t,k}$. $(b)$ follows because $\tilde{Z}_t$ exists surely, and the second term will be shown to exist almost surely. $(c)$ is valid because elements of $\{X_{t,n}\}_{n=1}^N$ are finite almost surely and because $N < \infty$. $(d)$ is true because of the strong law of large numbers.

\subsection{Hyperparameter Tuning for Predict-Each-Worker}
\label{app:jpo_tuning}
We describe how we do hyperparameter tuning for this predict-each-worker here. For each $K$, we do a separate hyperparameter search. We tune the hyperparameters for the SSL step, and then we tune the hyperparameter for the aggregation step using the best hyperparameters from the SSL step. 

The performance metric that we use to select hyperparameters in the SSL step judges the quality of the learned SSL models.  This metric is an expectation of a loss function.  The intuition behind this loss function is as follows.  Given a history $Y_{1:t-1}$, the SSL models approximate the clairvoyant joint distribution over $Y_t$.  The loss function is equal to the KL divergence between the clairvoyant distribution over $Y_t$ and the distribution implied by these SSL models (details in Appendix~\ref{app:ssl_metric}).

Empirically, when selecting hyperparameters, we noticed that there is no hyperparameter setting, that dominates across different values of $t$. Therefore, we choose the hyperparameters in two steps. First, we evaluate performance for $t \in \{ 1, K, 10 \times K, 100 \times K \}$ and filter out all hyperparameter combinations for which the performance does not monotontically improve as $t$ increases. Then, amongst the remaining hyperparameter combinations, we select the hyperparameters that result in the best performance for some large value of $t$ which we call $t_*$. We let $t_* = 100 \times K$. 

We let $\overline{u} = \frac{1}{K + 1} $ and $\indiLossPrior = 2 + \frac{2}{K + 1}$.  We let $\Lambda = \lambda ((1- \rho) I + \rho \1\1^\top)$ where $\lambda$ is the diagonal element and $\rho \lambda$ is the off-diagonal element. We search over $\rho \in \{ 0, 0.2, 0.4, 0.6, 0.8 \}$, $\lambda \in \{ \frac{0 \times K}{5}, \frac{2 \times K}{5}, \cdots, \frac{ 10 \times K}{5}  \} $ and $\regVar \in \{ 0,2,4,6 \}$.

When tuning the aggregation hyperparameter $r$, we use the best SSL parameters, and we select based on an estimate of the MSE (estimation procedure discussed in Appendix~\ref{app:mse_aw}). As with SSL parameters, we enforce that performance should monotonically improve as $t$ increases. Specifically, we evaluate for $t \in \{ 1, K, 3 \times K, 5 \times K, 7 \times K, 10 \times K \}$. Then, finally, we choose the best hyperparameters based on performance at $t_* = 10 \times K$. Note that the values $t$ we evaluate for and the value of $t_*$ is different for the SSL step and the aggregation step.

For each value of $t$, we estimate MSE by averaging across multiple training seeds. We vary $r \in \{2.5 \times K, 5 \times K, \cdots, 20 \times K \}$. The final hyperparameters can be found in Table~\ref{tab:jpo_hyperparams}.

\begin{table}
    \centering
    \begin{tabular}{|c|c|c|c|c|}
    \hline
    $\pmb{K}$ & $\pmb{\lambda}$ & $\pmb{\rho}$ & $\pmb{\regVar}$ & $\pmb{r}$ \\
    \hline 
    $10$ & $16$ & $0.4$ & $0$ & $75$ \\
    $20$ & $24$ & $0.6$ & $0$ & $150$ \\
    $30$ & $36$ & $0.6$ & $0$ & $300$ \\
    \hline 
    \end{tabular}
\caption{Final hyperparameters for different values of $K$}
\label{tab:jpo_hyperparams}
\end{table}

To tune hyperparameters for both EM and predict-each-worker, we assume access to the clairvoyant distribution $\Pr(Y_t \in \cdot | \theta)$.   We do this for simplicity.  In Appendix~\ref{app:hyperparam_tuning_real}, we discuss how hyperparameter tuning can be performed without access to $\Pr(Y_t \in \cdot | \theta)$.

\subsection{Estimating MSE}
\label{app:mse_aw}

The MSE is given by $\E[(Z_t - \hat{Z}_t^\pi)^2]$ where $\hat{Z}_t^\pi$ is the group estimate generated by a policy $\pi$ after observing $Y_{1:t}$.  Computing MSE exactly is typically computationally expensive and needs to be estimated.  Here, we discuss how we estimate MSE in the context of our experiments.  We utilize three features of our experiments to make it easier to estimate MSE.  First, we use the fact that the data is drawn from the Gaussian data-generating process.  Second, we use the knowledge of the noise covariance matrices from which the data is drawn from.  Third, we use the fact that all policies we consider generate a group estimate that is linear in $Y_t$, i.e., $\hat{Z}_t^\pi = \hat{\nu}_t^\top Y_t$ and $\hat{\nu}_t$ depends only on $Y_{1:t-1}$.

\begin{align*}
    \E[(Z_t - \hat{Z}_t^\pi)^2] \overset{(a)}{=} &\E\left[\E\left[(Z_t - \hat{Z}_t^\pi)^2 | Y_{1:t}, \Sigma_*\right]\right]; \\
    \overset{(b)}{=} &\E\left[\V[Z_t | Y_{1:t}, \Sigma_*]\right] + \E[(\E[Z_t | Y_{1:t}, \Sigma_*] - \hat{Z}_t^\pi)^2]; \\
    \overset{(c)}{=} &\E\left[\frac{1}{ \overline{v}^{-1} + \1^\top \Sigma_*^{-1}\1}\right] + \E[(\E[Z_t | Y_{1:t}, \Sigma_*] - \hat{Z}_t^\pi)^2]; \\
    \overset{(d)}{=} &\E\left[\frac{1}{ \overline{v}^{-1} + \1^\top \Sigma_*^{-1}\1}\right] + \E\left[\E[(\E[Z_t | Y_{1:t},  \Sigma_*] - \hat{Z}_t^\pi)^2 | \Sigma_*, Y_{1:t-1}]\right]; \\
    = &\E\left[\frac{1}{ \overline{v}^{-1} + \1^\top \Sigma_*^{-1}\1}\right] + \E\left[\E[( \nu_*^\top Y_t - \hat{\nu}_t^\top Y_t)^2 | \Sigma_*, Y_{1:t-1}]\right]; \\
    \overset{(e)}{=} &\E\left[\frac{1}{ \overline{v}^{-1} + \1^\top \Sigma_*^{-1}\1}\right] + \E\left[ (\nu_* - \hat{\nu}_t)^\top \E \left[ Y_t Y_t^\top | \Sigma_*, Y_{1:t-1} \right] (\nu_* - \hat{\nu}_t) \right]; \\
    = &\E\left[\frac{1}{ \overline{v}^{-1} + \1^\top \Sigma_*^{-1}\1}\right] + \E\left[ (\nu_* - \hat{\nu}_t)^\top (\Sigma_* + \overline{v} \1\1^\top)(\nu_* - \hat{\nu}_t) \right]
\end{align*}
$(a)$ is true using tower property.  $(b)$ uses the law of total variance.  $(c)$ uses the fact that the estimates are drawn from the Gaussian data-generating process.  $(d)$ uses tower property.  $(e)$ is true because $\nu_*$ is determined by $\Sigma_*$ and $\hat{\nu}_t$ is determined by $Y_{1:t-1}$. 

The calculation above implies that
\begin{equation}
    \E[(Z_t - \hat{Z}_t^\pi)^2]=  \E\left[\frac{1}{ \overline{v}^{-1} + \1^\top \Sigma_*^{-1}\1} + (\nu_* - \hat{\nu}_t)^\top (\Sigma_* + \overline{v} \1\1^\top)(\nu_* - \hat{\nu}_t) \right].
\end{equation}
We use Monte-Carlo simulation to estimate the right-hand side. Specifically, we sample a set of $\Sigma_*$, for each $\Sigma_*$ we sample one history $Y_{1:t-1}$, and then compute the quantity inside the expectation.

\subsubsection*{Number of seeds.} 
\begin{itemize}
    \item To perform hyperparameter tuning for EM and predict-each-worker, we sample sixty values of $\Sigma_*$.
    \item To generate plots in Figure~\ref{fig:perf_comp}, we sample fifty values of $\Sigma_*$.  These values are different from the values of $\Sigma_*$ that are used in tuning.
\end{itemize}

\subsection{Metric for Tuning SSL Hyperparameters}
\label{app:ssl_metric}
Here, we discuss the metric that we use for tuning SSL Hyperparameters.  This metric is an expectation of a loss function.  The loss function is equal to the KL divergence between the clairvoyant distribution over $Y_t$ and the distribution implied by the learned SSL models.  For the Gaussian data-generating process, recall that the clairvoyant distribution satisfies $\Pr(Y_t | \theta) \overset{{\rm d}}{=} \normal(0, S_*)$.  This implies that estimating $S_*$ is equivalent to estimating $\Pr(Y_t | \theta)$.  Matrix $S_*$ can be estimated from the learned SSL models, and we provide details of the estimation procedure below.

For the Gaussian data-generating process, our SSL models are parameterized by $\{ \indiWt, \indiLoss\}_{k=1}^K$.  We provide a two-step transformation to go from $\{ \indiWt, \indiLoss\}_{k=1}^K$ to a covariance matrix.  The first step is to apply a function $h: \{\Re^{(K-1)} \times \Re_{+}\}^K \to \Re^{K \times K}$.  Inputs to $h$ are parameters of SSL models, and the output is a matrix.  Suppose we input $\{ u^{(k)}, \ell^{(k)} \}_{k=1}^K$, then the output $h\left(\{ u^{(k)}, \ell^{(k)} \}_{k=1}^K\right)$ satisfies the following.
\begin{align*}
   \left[h\left(\{ u^{(k)}, \ell^{(k)} \}_{k=1}^K\right)\right]_{k,k'}=
    \begin{cases}
    1/\ell^{(k)} & \text{if } k=k'; \\
    -u^{(k,k')}/\ell^{(k)} & \text{if } k\neq k'.
    \end{cases} 
\end{align*}
The output of this function can be a covariance matrix.  For example, if the inputs are the prescient SSL parameters $\{ \indiWtOpt, \indiLossOpt\}_{k=1}^K$, the output is $S_*$.  However, it is not necessary that the output of $h$ will be a covariance matrix.  The second step of our transformation addresses this.

In the second step, we compute a covariance matrix $\hat{S}_t$ that we treat as the estimate of $S_*$.  If the determinant $|h\left(\{ u^{(k)}, \ell^{(k)} \}_{k=1}^K\right)|$ is negative, then we set $\hat{S}_t$ to be the all zeros matrix.  If $\left|h\left(\{ u^{(k)}, \ell^{(k)} \}_{k=1}^K\right)\right|$ is non-negative, then we find a covariance matrix that has the same determinant as the output of $h$ and is closest to this output in 2-norm sense.  We will discuss why such a covariance matrix must exist.  However, before we discuss this, we formally define $\hat{S}_t$. $\hat{S}_t$ is a minimizer of this optimization problem
\begin{align*}
    \min_{S \in \mathcal{S}^K_{++}} &\left\| h\left(\{ u^{(k)}, \ell^{(k)} \}_{k=1}^K\right) - S \right\|_2 \\
    \text{subject to } & |S| = \left|h\left(\{ u^{(k)}, \ell^{(k)} \}_{k=1}^K\right)\right|.
\end{align*}
The optimization problem is feasible because the diagonal matrix with elements $\left\{\left|h\left(\{ u^{(k)}, \ell^{(k)} \}_{k=1}^K\right)\right|,1,\cdots,1\right\}$ satisfies the constraint.

Having computed $\hat{S}_t$, we state the loss function to judge the quality of SSL parameters
\begin{align*}
    \KL\left( \normal(0, S_*) || \normal(0, \hat{S}_t) \right).
\end{align*}

\textbf{Number of seeds.} To perform hyperparameter tuning, we sample sixty values of $S_*$ and average the loss function above to estimate the performance metric.

\subsection{Hyperparameter Tuning with Unobserved Outcomes}
\label{app:hyperparam_tuning_real}

For our simulation study, we tuned hyperparameters assuming the knowledge of the clairvoyant distribution $\Pr(Y_t \in \cdot | \theta)$.  Specifically, we used this knowledge to estimate metrics that judged the quality of SSL models and group estimates.  The clairvoyant distribution $\Pr(Y_t \in \cdot | \theta)$ will be known in any practical situation.  We discuss how to tune hyperparameters in such a situation.  Though our ideas generalize, we make two simplifying assumptions.  First, we assume the data is drawn from a Gaussian data-generating process.  Second, we assume the estimates are zero-mean, i.e., $\E[Y_t] = 0$.

\textbf{Tuning SSL hyperparameters.}  Suppose we have a dataset of estimates $\data$.  We will divide this dataset into a training dataset $\data_{\rm train}$ and a test dataset $\data_{\rm test}$.  Given a set of SSL hyperparameters, first we compute $\{ \indiWt, \indiLoss \}_{k=1}^K$ based on $\data_{\rm train}$ and then use the procedure in Appendix~\ref{app:ssl_metric} to compute $\hat{S}_{\rm train}$ -- the estimate of $S_*$ based on SSL models.  We then compute the sample covariance matrix $\hat{S}_{\rm test}$ based on $\data_{\rm test}$ and judge the SSL hyperparameters based on 
\begin{align*}
    \KL\left(\normal(0, \hat{S}_{\rm train}) || \normal(0, \hat{S}_{\rm test})\right).
\end{align*}
Instead of doing one train-test split, one could also perform the so-called $k$-fold cross-validation.

\textbf{Tuning aggregation hyperparameters.}  The aggregation step has two hyperparameters: $\overline{v}$ and $r$.  Hyperparameter $\overline{v}$ indicates the prior variance $\V[Z_t]$ of true outcomes, and $r$ controls the rate of decay of regularization towards the prior aggregation weights.  We first discuss a method to set $\overline{v}$ and then discuss two methods to tune $r$.  

To set $\overline{v}$, we use the fact that we defined $Z_t$ to be the average of estimates produced by a large population of crowdworkers.  Hence, $\overline{v}$ can be estimated by computing the sample variance of the group estimates produced by averaging.  Ideally, this variance should be computed on a dataset where both $K$ and $t$ are large.  However, this typically would not be practical.  Hence, one would have to choose large enough values of $K$ and $t$ that are also practically feasible.

We now discuss two methods to tune $r$.  The first method involves estimating MSE using a so-called golden dataset.  This dataset contains crowdsourcing tasks for which the true outcomes can be estimated with high accuracy.  For our problem, the golden dataset can be constructed by having many crowdworker estimates for each outcome.  This dataset would only be used for evaluation and not for training.  As it would only be used for evaluation, it should be okay to have a limited number of engagements with each crowdworker when constructing this dataset.  The rationale is that it takes more data to train but less data to evaluate.

The second method to tune $r$ involves estimating a quantity that differs from MSE by an additive constant.  Suppose we want to do an evaluation on crowdsourcing tasks in $\data_{\rm eval}$.  We will first compute a group estimate for each crowdsourcing task in  $\data_{\rm eval}$.  Then, for each crowdsourcing task, we will sample a crowdworker uniformly randomly from a large pool of out-of-sample crowdworkers, get an estimate from the sampled crowdworker, and treat their estimate as the ground truth.  For each task, the crowdworker must be sampled independently.  Formally, this procedure allows us to estimate
\begin{align*}
    \E[( \tilde{Y}_{t,K+1} - \hat{Z}_t^\pi)^2 ],
\end{align*}
where $\tilde{Y}_{t,K+1}$ is the estimate of an out-of-sample crowdworker and $\hat{Z}_t^\pi$ is the group estimate.  The index $K+1$ acts as a placeholder for a random out-of-sample crowdworker, and one should not interpret this as some fixed crowdworker's estimate. 
The squared error above differs from MSE by an additive constant because $\tilde{Y}_{t,K+1}$ is a single-sample Monte-Carlo estimate of $\lim_{K' \to \infty} \frac{\sum_{k=K+1}^{K+K'}\tilde{Y}_{t,k}}{K'}$ and because $Z_t$ is equal to $\lim_{K' \to \infty} \frac{\sum_{k=K+1}^{K+K'}\tilde{Y}_{t,k}}{K'}$ when the limit exists.  We give more details below.
\begin{align*}
    \E[( \tilde{Y}_{t,K+1} - \hat{Z}_t^\pi)^2 ] \overset{(a)}{=} &\E\left[\E\left[( \tilde{Y}_{t,K+1} - \hat{Z}_t^\pi)^2 | Y_{1:t}, \Sigma_* \right]\right] \\
    \overset{(b)}{=} & \E[\V[\tilde{Y}_{t,K+1} | Y_{1:t}, \Sigma_*]] + \E\left[\left( \E\left[\tilde{Y}_{t,K+1} |Y_{1:t}, \Sigma_* \right] - \hat{Z}_t^\pi\right)^2 \right] \\
    \overset{(c)}{=} & \E[\V[\tilde{Y}_{t,K+1} | Y_{1:t}, \Sigma_*]] + \E\left[\left( \E\left[ \lim_{K' \to \infty}\frac{\sum_{k=K+1}^{K' + K}\tilde{Y}_{t,k}}{K'} |Y_{1:t}, \Sigma_* \right] - \hat{Z}_t^\pi\right)^2 \right] \\
    \overset{(d)}{=} & \E[\V[\tilde{Y}_{t,K+1} | Y_{1:t}, \Sigma_*]] + \E\left[\left( \E\left[ Z_t |Y_{1:t}, \Sigma_* \right] - \hat{Z}_t^\pi\right)^2 \right] \\
    = & \E[\V[\tilde{Y}_{t,K+1} | Y_{1:t}, \Sigma_*]] - \E[\V[Z_t | Y_{1:t}, \Sigma_*]] + \E\left[\left( Z_t - \hat{Z}_t^\pi\right)^2 \right].
\end{align*}
Here, $(a)$ is true because of the tower property.  $(b)$ uses the law of total variance.  $(c)$ uses the fact that $(\Tilde{Y}_{t,k}: k=K+1,K+2, \cdots)$ are exchangeable {\it a priori} and remain exchangeable even when conditioned on $Y_{1:t}$ and $\Sigma_*$.  $(d)$ uses the assumption that $\lim_{K' \to \infty} \frac{\sum_{k=K+1}^{K+K'}\tilde{Y}_{t,k}}{K'}$ exists almost surely and $Z_t$ is defined to be equal to this limit when it exists. The final equality makes it clear that $\E[( \tilde{Y}_{t,K+1} - \hat{Z}_t^\pi)^2 ]$ and $\E\left[\left( Z_t - \hat{Z}_t^\pi\right)^2 \right]$, the MSE, differ by an additive constant.

\section{Extension of Predict-Each-Worker for Settings with Observed Outcomes}
\label{app:ext_obs_outcomes}
Here, we describe an extension for our algorithm that can be used when some or all outcomes are observed.  However, if a large fraction of outcomes are observed, then one might be better off using standard supervised learning methods to learn a mapping from estimates to an outcome.

In our extension, we keep the SSL step the same, but we change the aggregation step.  First, we train the SSL models on all estimate vectors.  Then, for all estimate vectors whose outcomes are known, and for each SSL model, we compute the following statistics: expected error and SSL gradient -- this is similar to what we do in Section~\ref{sec:gen_agg}.  These SSL statistics and the corresponding outcomes are then used to train a neural network. This neural network takes a set of expected errors and SSL gradients as input and produces an estimate of the true outcome.  The rationale for using these SSL statistics instead of the estimate vectors as inputs is that it allows leveraging all observed estimates, even the ones for which the outcomes are not observed.  We will denote the neural network obtained after training as $\mathfrak{f}$.

Now, we discuss how to produce a group estimate for a new estimate vector $Y_t$.  We first compute a set of expected errors and SSL gradients based on $Y_t$.  Then, we input these SSL statistics to neural network $\mathfrak{f}$ to produce an estimate $\hat{Z}_{t}^{(\mathfrak{f})} \in \Re$ of the outcome.  Additionally, we use these statistics to compute $\hat{\nu}_t^\top Y_t$, where $\hat{\nu}_t$ is based on our aggregation formula \eqref{eq:agg_formula}.  Finally, we compute the group estimate by taking a convex combination of $\hat{Z}_{t}^{(\mathfrak{f})}$ and $\hat{\nu}_t^\top Y_t$,
\begin{equation*}
    \hat{Z}_t \gets \frac{a}{a + t} \hat{Z}_{t}^{(\mathfrak{f})} + \left(1 - \frac{a}{a + t}\right) \hat{\nu}_t^\top Y_t.
\end{equation*}
Here, $a \in \Re_+$ is a hyperparameter that can be tuned based on a validation set.  We should expect $a$ to increase with $t_o$, the number of estimates for which the outcome is known.  Reason being that larger the value of $t_o$, the more we can trust $\hat{Z}_{t}^{(\mathfrak{f})}$ .

\ignore{
\subsection{Proof of Lemma~\ref{lem:opt_wt_opt_ssl_params}}
\label{app:proof_opt_wt_opt_ssl_params}

We restate Lemma~\ref{lem:opt_wt_opt_ssl_params} for clarity.
\optWtOptSSLParams*

Now, we prove Lemma~\ref{lem:opt_wt_opt_ssl_params}.
\begin{proof}
Recall that $\nu_* = \nicefrac{\Sigma_*^{-1}\1}{(1+\1^\top\Sigma_*^{-1}\1)}$. With some algebra, we can rewrite $\nu_*$ as follows.
\begin{equation*}
    \nu_* = (\Sigma_* + \1\1^\top)^{-1}\1.
\end{equation*}
For simplicity of notation, we will refer to $\Sigma_* + \1\1^\top$ as $S_*$. To prove this lemma, we will prove the following about $S_*^{-1}$. For each $k \in \{1,\cdots,K\}$.
\begin{align}
    \label{eq:s_inv_in_terms_of_ssl_params}
    S^{-1}_{*,k,k'} = 
    \begin{cases}
        \frac{1}{\indiLossOpt} & \text{if } k=k' \\
        -\frac{u_*^{(k,k')}}{\indiLossOpt} & \text{if } k \neq k'. 
    \end{cases}
\end{align}
One can check that once this is proved, the statement of the lemma follows immediately.

To prove this, we begin by writing $S_*$ as follows.
\begin{align*}
    S_{*} = 
    \begin{bmatrix}
        S_{*,1,1} & S_{*,-1,1} \\
        S_{*,1,-1} & S_{*,-1,-1}
    \end{bmatrix}.
\end{align*}

Then, we express $u_*^{(1)}$ and $\ell_*^{(1)}$ in terms of elements of $S_*$. Specifically, we have
\begin{align*}
    u_*^{(1)} &= S_{*,-1,-1}^{-1}S_{*,-1,1}; \\
    \ell_*^{(1)} &= S_{*,1,1} - S_{*,1,-1}S_{*,-1,-1}^{-1}S_{*,-1,1}.
\end{align*}

Then, we apply Lemma~\ref{lem:block_inv} on matrix $S_*$ to get that the first row of $S_*^{-1}$ is the following.
\begin{align*}
    S^{-1}_{*,1,:} = \left[\frac{1}{\ell_*^{(1)}}, ~- \frac{(u_*^{(1)})^\top}{\ell_*^{(1)}} \right]
\end{align*}
This shows that Equation~\ref{eq:s_inv_in_terms_of_ssl_params} is true for $k=1$. Now, we prove that this equation is true for other values of $k$. 

For each $k>1$, at a high level, there are three main steps. 
\begin{enumerate}
    \item First, apply row and column exchange operations to get matrix $S_*^{(k)}$, which looks like the following.
    \begin{align*}
        S_*^{(k)} = 
        \begin{bmatrix}
            S_{*,k,k} & S_{*,k,-k} \\
            S_{*,-k,k} & S_{*,-k,-k}
        \end{bmatrix}
    \end{align*}
    \item Apply Lemma~\ref{lem:block_inv} to $S_*^{(k)}$ to show that the first row of $\left(S_{*}^{(k)}\right)^{-1}$ is
    \begin{align*}
         \left[\frac{1}{\ell_*^{(k)}}, ~- \frac{(u_*^{(k)})^\top}{\ell_*^{(k)}} \right].
    \end{align*}
    \item Finally, apply row and column exchange operations to the first row of $S_*^{(k)}$ and argue that Equation~\ref{eq:s_inv_in_terms_of_ssl_params} holds.
\end{enumerate}
More details are given below.

First, we claim that
\begin{align*}
    &S_*^{(k)} = G^{(k)} S_* G^{(k)}, \\
    \text{where } &G^{(k)} = 
    \begin{bmatrix}
        \textbf{0} & 1 & \textbf{0} \\
        I_{k-1} & \textbf{0} & \textbf{0} \\
        \textbf{0} & \textbf{0} & I_{K-k}
    \end{bmatrix}.
\end{align*}
Here $I_{k-1}$ and $I_{K-k}$ are identity matrices of size $k-1$ and $K-k$, respectively. The proof of this claim follows from writing $S_*$ as follows and doing matrix multiplications.
\begin{align*}
    S_* = 
    \begin{bmatrix}
        S_{*,1:k-1,1:k-1} & S_{*,1:k-1,k} & S_{*,1:k-1,k+1:K} \\
        S_{*,k,1:k-1} & S_{*,k,k} & S_{*,k,k+1:K} \\
        S_{*,k+1:K,1:k-1} & S_{*,k+1:K,k} & S_{*,k+1:K,k+1:K}
    \end{bmatrix}.
\end{align*}
The following statements might aid in understanding this step. Multiplying $G^{(k)}$ to the left leads to the exchange of first and second ``block" rows. Multiplying $G^{(k)}$ to the right leads to the exchange of first and second ``block" columns. 

The second step above follows from the expressions of $\indiWtOpt$ and $\indiLossOpt$ in terms of elements of $S_*$.
\begin{align*}
    u_*^{(k)} &= S_{*,-k,-k}^{-1}S_{*,-k,k}; \\
    \ell_*^{(k)} &= S_{*,k,k} - S_{*,k,-k}S_{*,-k,-k}^{-1}S_{*,-k,k}.
\end{align*}

For the third step, our starting point is the following:
\begin{align*}
    S_*^{-1} = G^{(k)}\left(S_*^{(k)}\right)^{-1}G^{(k)}.
\end{align*}
Note this equation holds from the definition of $S_*^{(k)}$.

Observe that the $k$:th row of $G^{(k)}\left(S_*^{(k)}\right)^{-1}$ is
\begin{align*}
     \left[\frac{1}{\ell_*^{(k)}}, ~- \frac{(u_*^{(k)})^\top}{\ell_*^{(k)}} \right].
\end{align*}
After doing ``block" column exchanges to $G^{(k)}\left(S_*^{(k)}\right)^{-1}$ by multiplying $G^{(k)}$ to the right, we get that Equation~\ref{eq:s_inv_in_terms_of_ssl_params} holds.
\end{proof}}

\end{document}